\definecolor{lightblue}{rgb}{0.93,0.95,1.0}
\newcommand{\ie}{i.e., }
\newcommand{\eg}{e.g., }
\newcommand{\code}[1]{\texttt{#1}}
\newtheoremstyle{mydef}
  {.1cm}   
  {0cm}   
  {\normalfont}  
  {0pt}       
  {\bfseries} 
  {.}         
  {5pt plus 1pt minus 1pt} 
  {}          
\theoremstyle{mydef}
\newtheorem{definition}{Definition}[section]
\newtheorem{theorem}{Theorem}
\newtheorem{lemma}{Lemma}
\renewenvironment{proof}[1][\proofname]{\par
  \normalfont \topsep6\p@\@plus6\p@\relax
  \trivlist
  \item[\hskip\labelsep
        \itshape
    #1\@addpunct{.}]\ignorespaces
}{%
  \endtrivlist\@endpefalse
}
\newcommand{\para}[1]{\noindent\textbf{#1}}
\definecolor{Gray}{gray}{0.85}
\definecolor{myblue}{rgb}{0,.6,1}
\definecolor{darkgray}{rgb}{.4,.4,.4}
\definecolor{lightblue}{HTML}{DAE8FC}
\definecolor{lightred-lightest}{HTML}{FDEDEC}
\definecolor{lightred-medium}{HTML}{FADBD8}
\definecolor{lightred-darkest}{HTML}{F5B7B1}
\definecolor{lightgray}{HTML}{B3B3B3}
\definecolor{shadecolor}{HTML}{DCDCDC}
\definecolor{lightorange}{HTML}{FFCC99}
\definecolor{gray95}{gray}{0.05}
\definecolor{code}{HTML}{660066}
\definecolor{ao(english)}{rgb}{0.0, 0.5, 0.0}
\definecolor{codegreen}{rgb}{0,0.6,0}
\newcolumntype{?}[1]{!{\vrule width #1}}
\def\sys{\textsc{SymC}\xspace}
\def\pto{PalmTree\xspace}
\def\pts{PalmTree-O\xspace}
\def\ptu{PalmTree-N\xspace}
\icmltitlerunning{Exploiting Code Symmetries for Learning Program Semantics}
\begin{document}

\twocolumn[
\icmltitle{Exploiting Code Symmetries for Learning Program Semantics}




\begin{icmlauthorlist}
\icmlauthor{Kexin Pei}{col,chi}
\icmlauthor{Weichen Li$^*$}{col}
\icmlauthor{Qirui Jin$^*$}{umich}
\icmlauthor{Shuyang Liu}{uiuc}
\icmlauthor{Scott Geng}{uw}

\icmlauthor{Lorenzo Cavallaro}{ucl}
\icmlauthor{Junfeng Yang}{col}
\icmlauthor{Suman Jana}{col}
\end{icmlauthorlist}

\icmlaffiliation{col}{Columbia University}
\icmlaffiliation{chi}{The University of Chicago}
\icmlaffiliation{umich}{University of Michigan}
\icmlaffiliation{uw}{University of Washington}
\icmlaffiliation{ucl}{University College London}
\icmlaffiliation{uiuc}{University of Illinois Urbana-Champaign}

\icmlcorrespondingauthor{Kexin Pei}{kpei@cs.uchicago.edu}
\icmlcorrespondingauthor{Suman Jana}{suman@cs.columbia.edu}


\vskip 0.3in
]

\printAffiliationsAndNotice{\icmlEqualContribution} 

\begin{abstract}

This paper tackles the challenge of teaching code semantics to Large Language Models (LLMs) for program analysis by incorporating code symmetries into the model architecture. We introduce a group-theoretic framework that defines code symmetries as semantics-preserving transformations, where forming a code symmetry group enables precise and efficient reasoning of code semantics. Our solution, \sys, develops a novel variant of self-attention that is provably equivariant to code symmetries from the permutation group defined over the program dependence graph. \sys obtains superior performance on five program analysis tasks, outperforming state-of-the-art code models, including GPT-4, without any pre-training. Our results suggest that code LLMs that encode the code structural prior via the code symmetry group generalize better and faster.

\end{abstract}

\section{Introduction}
\label{sec:intro}
Automated program analysis using Large Language Models (LLMs) has become widely popular for software engineering and security tasks~\citep{oss_llm_fuzz, didact-google}, but it remains unclear whether code LLMs can stay robust and generalize to new code~\citep{henke2022semantic, rabin2021generalizability, gao2023two, gao2023discrete, yefet2020adversarial, bundt2022black, zhang2023pelican}. 
This paper aims to enhance LLMs by establishing and preserving fundamental code symmetries, drawing inspiration from translation and rotation symmetries that typically hold in vision.



\para{Code symmetry.} 
Intuitively, symmetry of code refers to any transformation applied to a code block that preserves the semantics (\ie input-output behavior) of the original code. 
Consider a (sequential) code fragment \code{x=2;y=4}. Reordering the instructions to \code{y=4;x=2} does not change the semantics of the code. 
Of course, any code analysis task that depends solely on the semantics of the code (\eg bug detection) needs to preserve these symmetries by staying invariant to the transformations.
Otherwise, if a bug detector flips its prediction from correct to buggy due to such simple semantics-preserving permutations, developers will lose confidence in the tool~\cite{coverity:cacm}.
Formally, given a code block $c$ and a set of symmetries $G$, an LLM $m$ should ensure $\forall g \in G, m(g(c))=m(c)$.
Incorporating such an invariant property in the model has proven an effective approach in many domains to enforce domain-specific rules and improve generalization~\cite{cohen2016group}.

\para{Limitations of existing approaches.} 
A popular way to train LLMs to be robust to code symmetries is via large-scale pre-training, where the pre-training dataset likely includes many semantically equivalent code samples~\cite{roziere2023code}. 
While this approach improves the generalization, it is inherently a best-effort attempt and does not guarantee invariance for the pre-trained model.
Without the guarantee, the trained model lacks assurance when deployed for program analysis~\cite{ullah2023can}, \eg the malware can be easily transformed to evade detection.
In fact, we find that state-of-the-art LLMs break desired invariances at an alarmingly high rate, \eg 18\% in CodeLlama in predicting function names (Table~\ref{tab:violation-rate}) even for simple code symmetries like a two-statement permutation.

A more direct approach is to explicitly enumerate the code symmetries via data augmentation.
However, it is prohibitively expensive due to the sheer number of possible symmetries and their compositions. 
Similar to pre-training, even exhausting the code symmetries in the augmented training set does not provide any guarantee that the trained model stays invariant to the observed code symmetries. 
An alternative strategy involves approximating code symmetry structures as priors within the model's architecture, \eg data and control flow graphs, using Graph Neural Nets (GNNs) ~\cite{allamanis2017learning}.
Such approaches are argued to offer a better generalization, \eg as evidenced in Table~\ref{tab:violation-rate} where GGNN has the lowest violation rate among the baselines.
However, graph architectures often restrict a model's expressiveness relative to LLM architectures~\cite{ying2021transformers}.
Moreover, the existing common practice of encoding code structures into GNNs does not explicitly preserve code symmetries and thus lacks guarantee that the code symmetries are indeed preserved (see \S\ref{sec:eval}).




\begin{figure*}[!t]
\begin{minipage}{.22\linewidth}
    
    \centering
    \footnotesize
    \setlength{\tabcolsep}{2pt}
    \renewcommand{\arraystretch}{1}
    
    \captionof{table}{Invariance violation rate across different code models (darker colors indicate more violations).}
    \label{tab:violation-rate}
        \begin{tabular}{ll}
            \toprule[1.1pt]
            & \textbf{Violation} \\ \midrule[.9pt]
            \textbf{\sys (Ours)} & \textbf{0\%} \\ 
            code2vec & \cellcolor{lightred-darkest}61\% \\
            code2seq & \cellcolor{lightred-darkest}52\% \\
            CodeLlama & \cellcolor{lightred-medium}18\% \\
            CodeT5 & \cellcolor{lightred-medium}16\%\\
            DOBF & \cellcolor{lightred-darkest}41\%\\
            GGNN & \cellcolor{lightred-lightest}7\% \\
            GPT-4 &  \cellcolor{lightred-darkest}43\% \\  
            GraphCodeBERT & \cellcolor{lightred-darkest}31\%\\
            WizardCoder & \cellcolor{lightred-medium}14\% \\ 
            \bottomrule[1.1pt]
        \end{tabular}
\end{minipage}\hfill%
\begin{minipage}{.76\linewidth}
    \centering
    
    \subfloat[\textbf{\sys (Ours)}]{
    \includegraphics[width=0.49\linewidth]{./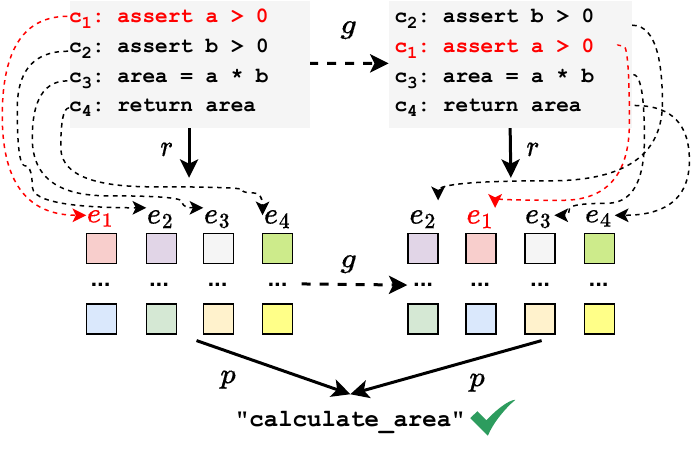}
    \label{subfig:example-g-invariant}}\hspace{-.3cm}%
    \subfloat[Existing code LLMs]{
    \includegraphics[width=0.5\linewidth]{./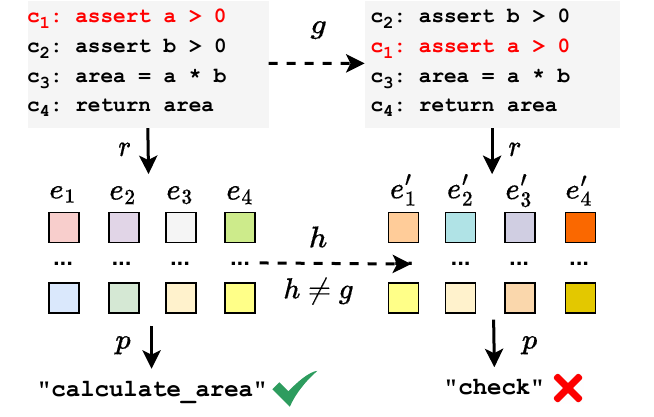}
    \label{subfig:example-g-variant}}

    \caption{
    (a) \sys as a $G$-invariant function name predictor where $G$ is a group of semantics-preserving statement permutations $g$.
    (b) Code LLMs not preserving the symmetries in $G$ and thus mispredict the label.}
    \label{fig:motivating-example}
\end{minipage}

\end{figure*}

\para{Our approach.} In this paper, we investigate how to modify expressive architecture like Transformers to provably impose semantic priors like code symmetries while still preserving the learning capacity.
We introduce a group-theoretic framework to precisely define code symmetries in terms of semantics-preserving statement permutations and create LLM architectures that inherently preserve these symmetries \emph{by construction}. 
Importantly, such a group-theoretic framework expresses semantic priors in a syntax-agnostic way while being amenable to be encoded in neural architectures~\cite{cohen2016group, romero2020group}. 
This makes symmetries an ideal option to represent program semantics as the inductive bias for code LLM architectures. 

Using this framework, we present \sys, a variant of LLM architecture designed to \emph{provably guarantee} invariance to semantics-preserving statement permutations. 
This is achieved through a $G$-equivariant code representation ($r$) followed by a $G$-invariant prediction ($p$), with $G$ determined based on the graph automorphisms of the code block's interpretation graph (a generalization of program dependence graph).


\begin{figure}[!t]
  \begin{center}
    \centering
    
    \subfloat[\textbf{\sys (ours)}]{
    \includegraphics[width=.45\linewidth]{./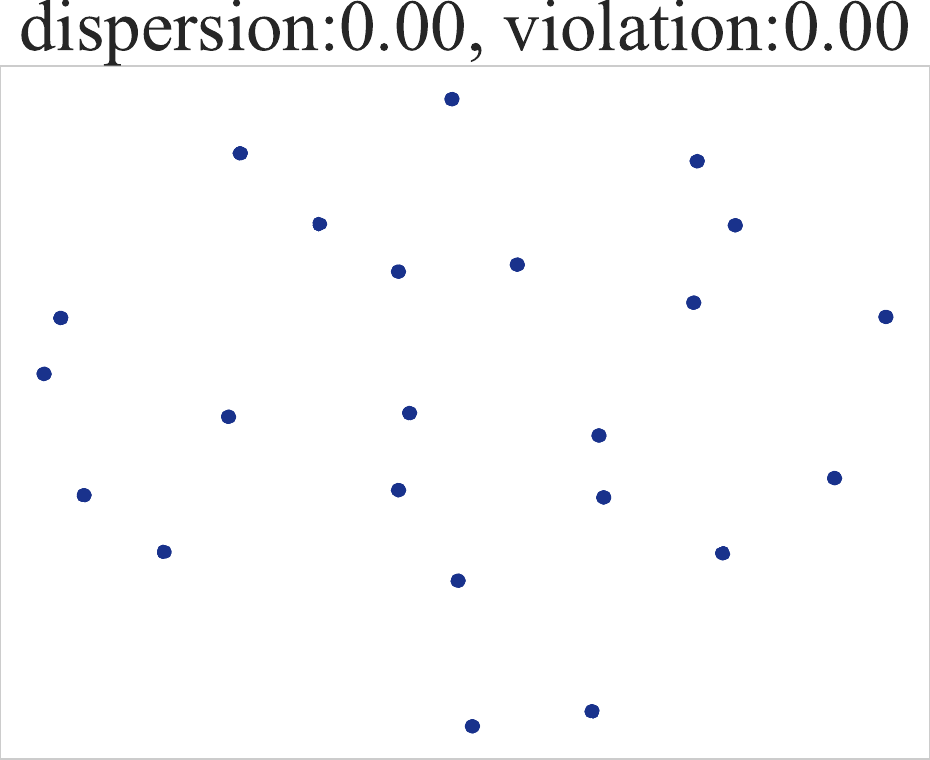}
    \label{subfig:symc-tsne}}
    \subfloat[code2vec]{
    \includegraphics[width=0.45\linewidth]{./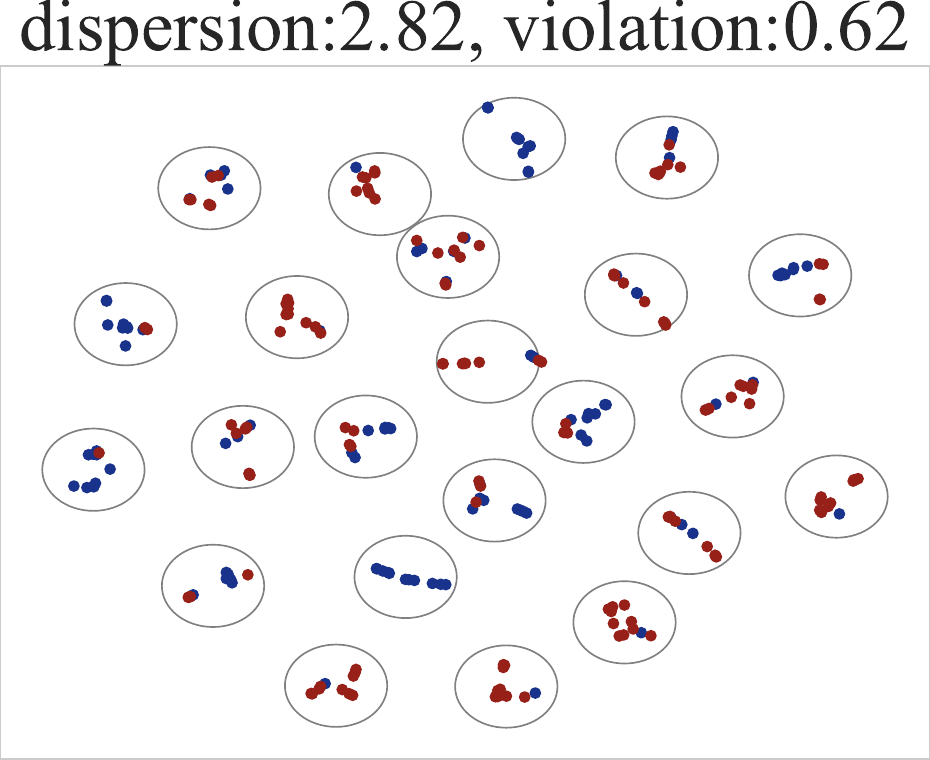}
    \label{subfig:code2vec-tsne}}


    \subfloat[CodeT5]{
    \includegraphics[width=.45\linewidth]{./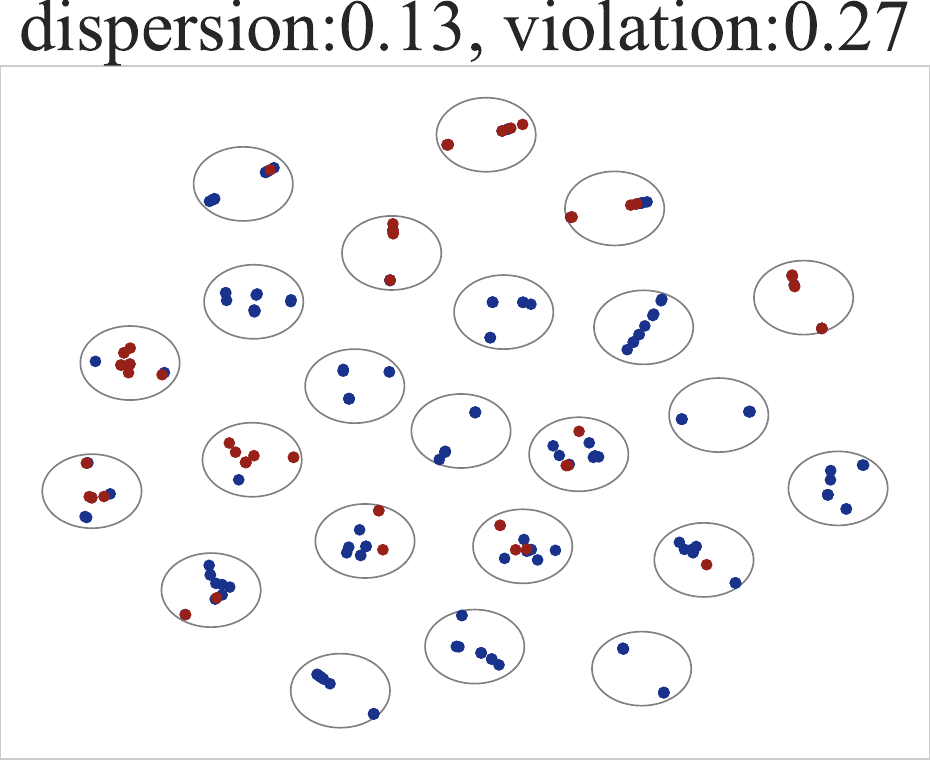}
    \label{subfig:codet5-tsne}}
    \subfloat[CodeLlama-7b]{
    \includegraphics[width=0.45\linewidth]{./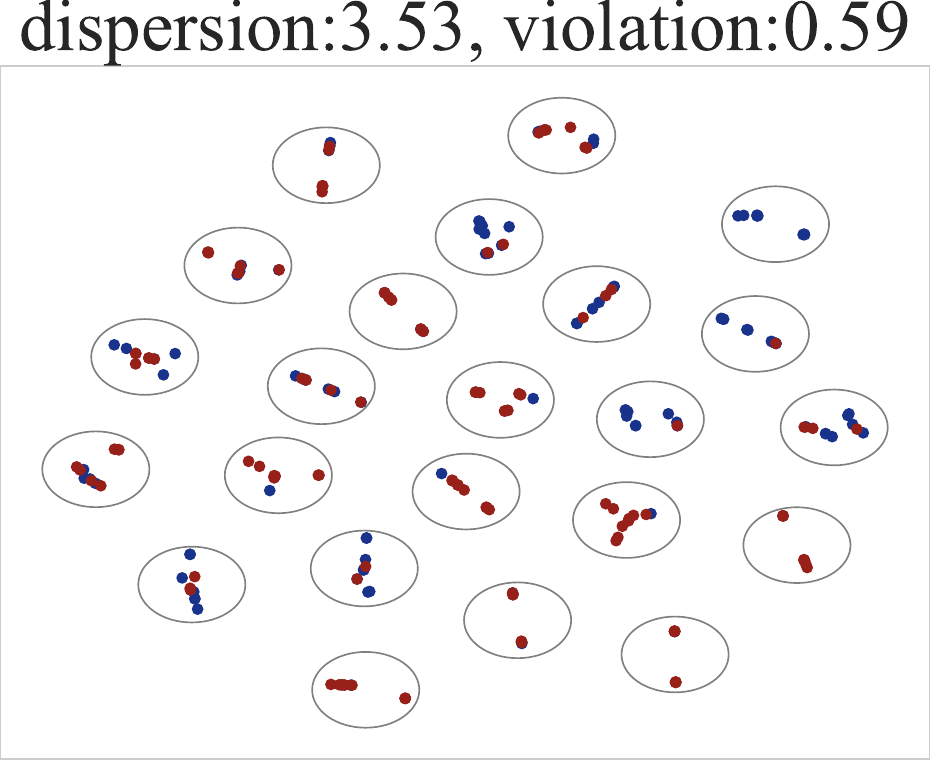}
    \label{subfig:codellama-tsne}}
    
    \caption{Each cluster represents the learned embeddings of a code block and its semantics-preserving permuted versions. The cluster's dispersion (variances to the mean) indicates that the permutation changes the embeddings, while the color turning from \colorbox{lightblue}{blue} to \colorbox{lightred-medium}{red} indicates the changed predictions. We take the mean of \sys's embedding so it becomes permutation-invariant (\S\ref{subsec:ig_invariant_predictive_learning}).}
    
    \label{fig:tsne}
    
  \end{center}
\end{figure}

Figure~\ref{subfig:example-g-invariant} shows a concrete example of the benefit of \sys when deployed for function name prediction.
The code snippets illustrate a semantics-preserving statement reordering. 
\sys enforces its output to stay invariant via keeping its learned representation $G$-equivariant, where the code representation $(e_1,e_2,e_3,e_4)$ is transformed into $(e_2,e_1,e_3,e_4)$, followed by a $G$-invariant prediction module.
By contrast, Figure~\ref{subfig:example-g-variant} shows an existing code model~\citep{jin2022symlm} that does not preserve permutation symmetry. 
In this case, the code representation $(e_1,...,e_4)$ is transformed into a completely different set of embeddings $(e'_1,...,e'_4)$, leading to a changed prediction.
In fact, the t-SNE visualization in Figure~\ref{fig:tsne} shows that the learned code embeddings of existing code models are highly dispersed when the code is under semantics-preserving permutations, and a large fraction of the permuted samples have their labels mispredicted.

\para{Result summary.} We evaluate \sys on five program analysis tasks against 12 source code and binary analysis baselines. 
For semantics-preserving permutations, \sys is guaranteed to stay invariant while the state-of-the-art code LLMs, \eg CodeLlama, violate the invariance by 31.4\% on average.
As a result, \sys outperforms the extensively pre-trained baselines by up to 67.5\%, even though \sys is only trained from scratch \emph{without requiring any pre-training}.
For unseen semantics-preserving source code transformations beyond permutations, \sys surpasses the state-of-the-art code baselines by up to 30.8\%, while maintaining 104.8$\times$ smaller model size.
On sophisticated code transformations introduced by compiler optimizations and obfuscations, \sys outperforms the extensively pre-trained binary analysis baseline by 30.7\%.

\section{Preliminaries}
\label{sec:preliminaries}



This section briefly describes the symmetry groups.
See Appendix~\ref{app_sec:preliminaries} for a more formal description. 

A \emph{symmetry group} $(G,\circ)$ consists of a non-empty set $G$ of transformations and a binary operator $\circ:G\times G\rightarrow G$, where $\circ$ operates on two elements in $G$, \eg $x,y\in G$, and produces a new transformation $z=x\circ y, z\in G$.
The binary operator has to be associative, invertible, and there exists an identity $\exists \mathbf{1}\in G, \forall x\in G, x\circ \mathbf{1}=\mathbf{1}\circ x$.


The elements of a $G$ are abstract transformations that become concrete when they \emph{act} on some set $X$, \ie they transform $x\in X$ into $x'\in X$ while keeping some properties of $x$ \emph{invariant}. 
Formally, an action $\bullet$ of a $G$ is a binary operation defined on a set of objects $X$, \ie $\bullet: G\times X\rightarrow X$, where it is also associative and has an identity.



It is common in the group theory literature to use $\circ$ to denote both \emph{action} and \emph{composition}, when it is clear from the context~\citep{higgins2018towards}. 
It is also customary to interchange $g(x)$ and $g\circ x$.
Therefore, we treat $g\bullet (h\bullet x)$, $g\circ (h\circ x)$, and $g(h(x))$ as the same in the rest of this paper. 


A symmetry group comes with two properties, namely \emph{invariance} and \emph{equivariance}, that formalize the concept of preservation of some properties when a set $X$ is acted upon by $G$. 
Let $f$ be a function that maps each element $x\in X$ to a corresponding element $y$ in the set $Y$, indicating the property's value for that particular element.
$f$ is called \textbf{$G$-invariant} if $\forall g\in G, \forall x\in X, f(g\circ x)=f(x)$.
$f$ is called \textbf{$G$-equivariant} if $\forall g\in G, \forall x\in X, f(g\circ x)=g\circ f(x)$.








\section{Method}
\label{sec:method}


This section describes the construction of group-equivariant self-attention layers and the group-invariant code model.

\subsection{Invariance \& Equivariance for Code Models}
\label{subsec:representation_learning}

\para{Code representation units.}
We establish formal definitions of the code space as a collection of code blocks, which serve as the input space for representation learning. 

\begin{definition}
\label{def:pru}

A {\bf code representation unit} (\eg procedure) $c$ consists of $n$ instructions from an instruction set $I$, \ie $c\in I^n$. 
The {\bf code space} $I^n$ is the set of all code representation units of interest.
\end{definition}

A typical Code Representation Unit (CRU) is a method with well-defined interfaces, ensuring controlled interaction with other methods, without arbitrary control transfers. 
Below, we provide formal definitions for learning program representation and predictive learning.

We establish formal properties for code analysis models with explicit representation learning $r$ and predictive learning $p$ based on $G$-equivariance/invariance. 

\begin{definition}[{\bf $G$-equivariant code representation learning}]
\label{def:g-equivariant-r}

Let $G$ be a symmetry group consisting of \emph{semantics-preserving transformations} applied to a CRU $c\in I^n$. A representation function $r:I^n \rightarrow \mathbb{R}^{d\times n}$ is $G$-equivariant if for every $g\in G$ and $c\in I^n$, we have $g\circ r(c)=r(g\circ c)$ ($d$ denotes the dimension of the embedding to which each instruction is mapped).

\end{definition}

Note that here the input space of $r$ ($I^n$) and its output space ($\mathbb{R}^{d\times n}$) are both sets of size $n$, where each instruction $I$ is mapped to a $R^d$ vector by the representation function. This consideration is necessary to ensure the symmetry group can act on both sets appropriately.

\begin{definition}[{\bf $G$-invariant code predictive learning}]
\label{def:g-invariant-p}

Let $G$ be a symmetry group consisting of \emph{semantics-preserving transformations} applied to program representation vector $c\in I^n$. A predictive learning function $p: \mathbb{R}^{d\times n} \rightarrow \mathbb{R}^{L}$ is $G$-invariant if $\forall g\in G, \forall e\in \mathbb{R}^{d\times n}$, $p(g\circ e)=p(e)$.
\end{definition}

Stacking $p$ on top of $r$, $p\circ r$, leads to a $G$-invariant model according to Lemma~\ref{lemma:g_equivariant_and_invariant}.

\subsection{Semantics-Preserving Program Symmetries} 
\label{subsec:semantic_preserving_program_symmetries}

A code symmetry is a program transformation that preserves the input-output behavior of a CRU when interpreted by the program interpretation function $f$. 
The program interpretation function takes a CRU $c \in I^n$ as input.
We use $\mathcal{I}$ to represent the set of all input values to execute CRU, and produce output values represented by the set $\mathcal{O}$.


\begin{definition}
\label{def:program_symmetry}
A {\bf semantics-preserving code symmetry} $g$ is a transformation acting on $c\in I^n$ ($g:I^n \rightarrow I^n$) such that $\forall in\in \mathcal{I}, \forall out\in \mathcal{O}, f(g\circ c, in)=f(c, in)=out$.
\end{definition}

\begin{definition}
\label{def:program_symmetry-group}
A semantics-preserving {\bf program symmetry group} $G$ is a set of semantics-preserving program symmetries that also satisfy the group axioms.
\end{definition}



\subsection{$Aut(\mathcal{IG})$: A Program Symmetry Group}
\label{subsec:finding_program_symmetry}

In this paper, we focus on a specific symmetry group that maintains the structural integrity of CRUs by utilizing their inherent compositional structure. 
However, note that this approach is not the only way to form code symmetry groups and does not encompass all possible code symmetries. 
We leave further exploration in these directions to future work.
Next, we describe the compositional structure of the program interpreter $f$ operating on a CRU, enabling us to define the program interpretation graph that links CRUs to their input-output behavior. 

\para{Compositional structure of program interpreter $f$.}
The interpreter function $f$ (defined in \S\ref{subsec:semantic_preserving_program_symmetries}) can be represented as a composition of individual per-instruction interpreter functions $\{f_1,...,f_n\}$.
Each $f_i: \mathcal{I}_i\rightarrow \mathcal{O}_i$ interprets a single instruction $c_i$ from the instruction set $I$ (Definition~\ref{def:pru}), takes the input values $in_i\in\mathcal{I}_i$, and produce the output values $out_i\in\mathcal{O}_i$. 
The output of $f_i$ can include both data flow elements (\eg variables or memory locations with values assigned by $f_i$) and control flow elements (\eg addresses of next interpreter functions $f_j\in f$ assigned by $f_i$). 
Consequently, we can express $f$ as the composition of different individual interpreters, \ie $f_n\circ...\circ f_1$, where later instructions act on the output of previous instructions. 



\para{Program interpretation graph ($\mathcal{IG}$).}
Programs often involve different control flow paths, such as if-else statements, leading compositions between individual interpreter functions to a directed graph instead of a linear sequence. This graph is referred to as the program interpretation graph. For a given CRU $c$, there can be multiple execution paths, each exercising different subsets of $\{f_1,...,f_n\}$.

To construct the interpretation graph $\mathcal{IG}=(V,E)$, we consider all feasible execution paths of $c$. 
In $\mathcal{IG}$, each node $V_i\in V$ corresponds to $f_i$, and each directed edge $E_{i,j}\in E$ (connecting $V_i$ to $V_j$) represents at least one execution path where $f_j$ takes the output of $f_i$ as input, \ie $E_{i,j}=(out_i, in_j)$. 


\para{Automorphism group of interpretation graph.}
Our objective is to find a group of symmetries that act on $c$ while preserving its input and output behavior as interpreted by $f$ in terms of $\mathcal{I}$ and $\mathcal{O}$ (Definition~\ref{def:program_symmetry}). 
Intuitively, as $\mathcal{IG}$ represents all execution paths of $c$, any transformations that preserve $\mathcal{IG}$ should also preserve the execution behavior of $c$.
Therefore, we aim to uncover a group of symmetries that preserve $\mathcal{IG}$ (Theorem~\ref{theorem:automorphism}), and such a group can guide us to construct code analysis model that can stay invariant to all symmetries of the group (\S\ref{subsec:ig_invariant_code_analysis}).


To achieve this, we consider a specific set of symmetries called the \emph{automorphisms} of $\mathcal{IG}$, denoted as $Aut(\mathcal{IG})$. 
An automorphism is a group of symmetries $\sigma\in Aut(\mathcal{IG})$ that act on the interpretation graph $\mathcal{IG}=(V,E)$. 
Intuitively, graph automorphisms can be thought of as permutations of nodes that do not change the connectivity of the graph. 
$Aut(\mathcal{IG})$ is formally defined as follows:

\begin{definition}[$\mathcal{IG}$ Automorphism]
\label{def:ig_automorphism}

$\mathcal{IG}$ automorphism is a group of symmetries $\sigma\in Aut(\mathcal{IG})$ acting on an interpretation graph $\mathcal{IG}=(V,E)$, where $\sigma$ is a bijective mapping: $\sigma: V \rightarrow V$, such that for every edge $E_{i,j} \in E$, \ie connecting $f_i$ and $f_j$, there is a corresponding edge $(\sigma(f_i), \sigma(f_j)) \in E$.

\end{definition}

We now show how the automorphism $\sigma\in Aut(\mathcal{IG})$ preserves all input and output behavior of $\{f_1,...,f_n\}$ in the space of $\mathcal{I}$ and $\mathcal{O}$.
As mentioned earlier, graph automorphism is a permutation on the set of nodes in $\mathcal{IG}$ such that the edges $E_{i,j}=(out_i,in_j)$ are preserved in the transformed $\mathcal{IG}'$.
As each $f_i\in\{f_1,..,f_n\}$ operates on $c_i\in c$, we have the following (see Appendix~\ref{app:proof} for the proof):

\begin{theorem}
\label{theorem:automorphism}

The set of automorphisms $\sigma\in Aut(\mathcal{IG})$ forms a program symmetry group.
    
\end{theorem}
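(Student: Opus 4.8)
The plan is to discharge two separate obligations that are implicit in the statement: first, that $(Aut(\mathcal{IG}),\circ)$ satisfies the four axioms of Definition~\ref{def:group}; second, that every $\sigma\in Aut(\mathcal{IG})$, via its induced action on $c$, is a semantics-preserving program symmetry in the sense of Definition~\ref{def:program_symmetry}. Together these are exactly what Definition~\ref{def:program_symmetry-group} asks for.

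For the group axioms I would invoke the classical fact that the automorphisms of a (vertex-labeled) graph form a group under function composition, and specialize it to $\mathcal{IG}$. Composition of maps $V\to V$ is associative, so associativity is inherited; the identity map $\mathrm{id}_V$ trivially sends every edge $E_{i,j}$ to itself, so it is an automorphism and acts as $\mathbf{1}$; if $\sigma$ is a bijection with $(\sigma(f_i),\sigma(f_j))\in E$ whenever $(f_i,f_j)\in E$, then $\sigma^{-1}$ is again a bijection and, applying $\sigma^{-1}$ to both endpoints of an arbitrary edge, one checks it preserves $E$ as well, giving inverses; and for closure, if $\sigma,\tau\in Aut(\mathcal{IG})$ then $\sigma\circ\tau$ is a bijection mapping each edge through two edge-preserving steps, hence lies in $Aut(\mathcal{IG})$. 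This part is routine.

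The substantive step is showing each $\sigma$ is semantics-preserving. Since node $V_i$ of $\mathcal{IG}$ corresponds to interpreter $f_i$, which interprets instruction $c_i$, the permutation $\sigma$ of $V$ induces a permutation $g_\sigma$ of the instructions of $c$; I would first check that $g_\sigma\circ c\in I^n$ is a well-defined CRU and that $\sigma\mapsto g_\sigma$ is compatible with composition, so that this really is a group action. The crux is then: for every input $in\in\mathcal{I}$, $f(g_\sigma\circ c,in)=f(c,in)$. Here I would use that each edge carries the dependency datum $E_{i,j}=(out_i,in_j)$, so edge preservation means the entire data-flow and control-flow dependency relation among $\{f_1,\dots,f_n\}$ is carried to an identical relation among $\{f_{\sigma(1)},\dots,f_{\sigma(n)}\}$. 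Because $f=f_n\circ\cdots\circ f_1$ factors as a composition along the execution paths recorded in $\mathcal{IG}$, and the relabeled graph $\mathcal{IG}'$ induced by $\sigma$ has the same edge set, every feasible path of $g_\sigma\circ c$ is the $\sigma$-image of a feasible path of $c$ with the same producer--consumer wiring; an induction over a topological layering of each path then shows the intermediate values $out_i$, and hence the final output in $\mathcal{O}$, are unchanged. Finally, $Aut(\mathcal{IG})$ being closed under composition and consisting entirely of semantics-preserving symmetries is, by Definition~\ref{def:program_symmetry-group}, a program symmetry group.

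I expect the main obstacle to be that last induction: rigorously arguing that \emph{any} reordering of instructions respecting all dependency edges of $\mathcal{IG}$ leaves the input--output map invariant. This is essentially a confluence / Church--Rosser-style claim about commuting independent instructions, and it interacts delicately with how $\mathcal{IG}$ abstracts execution --- it ignores how many times an edge is traversed (e.g. loop iteration counts) and it identifies a node with its instruction label, so $\sigma$ must in fact respect those labels. Pinning down precisely which structure on $\mathcal{IG}$ is used --- and confirming it is strong enough to rule out semantics-changing permutations yet weak enough that nontrivial automorphisms still exist --- is where the real work lies; the group-theoretic bookkeeping around it is comparatively mechanical.
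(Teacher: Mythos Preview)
Your proposal is correct and follows essentially the same approach as the paper: both discharge the two obligations separately, invoking the classical fact that graph automorphisms form a group for the algebraic part, and arguing that edge preservation in $\mathcal{IG}$ forces each per-instruction interpreter $f_i$ to see the same inputs and produce the same outputs for the semantic part. Your version is more explicit---you spell out the induction over a topological layering of execution paths and flag the subtleties around loop-iteration abstraction and label preservation---whereas the paper's argument stays at the level of ``same edges $\Rightarrow$ same $in_i$/$out_i$ per node'' and cites the group fact from the literature, but the underlying idea is the same.
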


\subsection{$Aut(\mathcal{IG})$-Equivariant Code Representation}
\label{subsec:ig_invariant_code_analysis}


Existing program analyses using Transformer typically involve an embedding layer followed by applying $l$ self-attention layers $A^l$. 
A prediction head $F$ is then placed on top of $A^l$ for downstream analysis tasks.
We can thus consider the representation learning $r$ as the composition of the embedding layer and $A^l$, with $F$ as the predictive learning $p$ (\S\ref{subsec:representation_learning}).
We now present the development of a new self-attention layer that is $Aut(\mathcal{IG})$-equivariant. 





\noindent
{\bf Self-attention.} The standard self-attention computation can be succinctly represented as $w_v\cdot s(w_k^T\cdot w_q)$, where $w_v$, $w_k$, and $w_q$ are learnable parameters for transforming value, key, and query, respectively, and $s(\cdot)$ represents scaling by $\sqrt{d}$ and applying Softmax (see Appendix~\ref{app_sec:preliminaries}).

It is easy to show that the existing self-attention layer is equivariant to permutations (Appendix~\ref{app:proof}). 
However, we want to make the self-attention layers equivariant \emph{only} to $Aut(\mathcal{IG})$, not \emph{all permutations}. 
In the following, we describe how to build $Aut(\mathcal{IG})$-equivariant self-attention.

\para{Biasing self-attention with a distance matrix.}
To build $Aut(\mathcal{IG})$-equivariant self-attention layers, denoted as $G\!A$, we add a customized distance matrix $d_{\mathcal{IG}}$ to $G\!A$: $G\!A(e) = w_v e\cdot s(w_k e^T\circ w_q e+d_{\mathcal{IG}})$.
Such a distance matrix is a superset of the adjacency matrix of $\mathcal{IG}$, encoding a richer topology structure of the graph.
We relax the definition of distance matrix $d_{\mathcal{IG}}$ here to be no longer symmetrical, as long as it satisfies the following two properties:
(1) $d_{\mathcal{IG}}$ stays invariant when $\sigma\in Aut(\mathcal{IG})$ acts on $\mathcal{IG}$: $d_{\mathcal{IG}}=\sigma(d_{\mathcal{IG}})$, and 
(2) $d_{\mathcal{IG}}$ commutes with permutation matrix $p_{\sigma}$ ($\sigma\in Aut(\mathcal{IG})$).

We will describe a concrete instantiation of $d_{\mathcal{IG}}$ in \S\ref{subsec:encoding_graph_topology}.
Based on the two properties, we have the following Theorem (with its proof in Appendix~\ref{app:proof}.

\begin{theorem}
\label{theorem:ga_is_equivariant}
    Self-attention $G\!A(e) = w_v e\cdot s(w_k e^T\cdot w_q e+d_{\mathcal{IG}})$ is $Aut(\mathcal{IG})$-equivariant.
\end{theorem}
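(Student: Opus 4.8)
The plan is to verify the equivariance condition directly: for every $\sigma \in Aut(\mathcal{IG})$ and every $e \in \mathbb{R}^{d \times n}$, show $GA(\sigma \circ e) = \sigma \circ GA(e)$, where the action of $\sigma$ on an embedding matrix is right multiplication by its permutation matrix, $\sigma \circ e = e\, p_\sigma$, as set up in the permutation-matrix discussion. The key observation is that, by distributivity of matrix multiplication over addition,
\[
GA(e) = \underbrace{w_v e \cdot s(w_k e^T \circ w_q e)}_{=:A(e)} \;+\; \underbrace{w_v e \cdot d_{\mathcal{IG}}}_{=:B(e)},
\]
so it suffices to show that both $A$ and $B$ are $Aut(\mathcal{IG})$-equivariant and then combine them.

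For $A$: this is exactly the standard (unbiased) self-attention layer, which is already known to be equivariant to the \emph{full} permutation group on the $n$ tokens (Appendix~\ref{app:proof}). Since every $\sigma\in Aut(\mathcal{IG})$ is in particular a permutation of the $n$ nodes, $Aut(\mathcal{IG})$ is a subgroup of that full permutation group, and restricting the known equivariance gives $A(e\, p_\sigma) = A(e)\, p_\sigma$. If a self-contained argument is preferred, I would note that $w_v, w_k, w_q$ act on the feature ($d$-)dimension and hence commute with right multiplication by $p_\sigma$; that $(e\, p_\sigma)^T = p_\sigma^T e^T$; that the raw score matrix therefore transforms by conjugation, $w_k (e p_\sigma)^T \circ w_q (e p_\sigma) = p_\sigma^T \big(w_k e^T \circ w_q e\big) p_\sigma$; and that the scaled softmax is equivariant under conjugation by a permutation matrix, $s(p_\sigma^T M p_\sigma) = p_\sigma^T s(M) p_\sigma$. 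Using $p_\sigma p_\sigma^T = I_n$ then collapses the argument to $A(e p_\sigma) = A(e) p_\sigma$.

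For $B$: we have $B(e\, p_\sigma) = w_v e\, p_\sigma\, d_{\mathcal{IG}}$. By Property~(2) assumed of the distance matrix, $d_{\mathcal{IG}}$ commutes with $p_\sigma$, so $p_\sigma\, d_{\mathcal{IG}} = d_{\mathcal{IG}}\, p_\sigma$ and hence $B(e\, p_\sigma) = w_v e\, d_{\mathcal{IG}}\, p_\sigma = B(e)\, p_\sigma$. Finally, since the group action on the output space is right multiplication by $p_\sigma$, which distributes over matrix addition, the sum of two $Aut(\mathcal{IG})$-equivariant maps is again $Aut(\mathcal{IG})$-equivariant:
\[
GA(e\, p_\sigma) = A(e\, p_\sigma) + B(e\, p_\sigma) = A(e)\, p_\sigma + B(e)\, p_\sigma = \big(A(e) + B(e)\big)\, p_\sigma = GA(e)\, p_\sigma,
\]
which is the claim.

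I expect the delicate points to be bookkeeping rather than conceptual. Within this proof, the one place to be careful is the equivariance of $s(\cdot)$: one must check that the row-wise softmax and the $\sqrt{d}$ scaling genuinely commute with conjugation by $p_\sigma$, which hinges on fixing a consistent row/column convention for the score matrix so the permutation lands on the correct side. The real content, however, lives outside this theorem — in Property~(2) (equivalently Property~(1)) of $d_{\mathcal{IG}}$, namely that the chosen topology matrix actually commutes with the permutation matrices of \emph{all} graph automorphisms of $\mathcal{IG}$; that obligation is discharged by the concrete construction in \S\ref{subsec:encoding_graph_topology}, on which this proof is explicitly conditional.
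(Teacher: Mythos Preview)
Your proposal is correct and follows essentially the same approach as the paper: a direct computation using the permutation-matrix representation of $\sigma$, the orthogonality $p_\sigma p_\sigma^T = I$, permutation-equivariance of $s(\cdot)$, and the commutation property of $d_{\mathcal{IG}}$. The only organizational difference is that you split $GA = A + B$ and invoke the already-proved equivariance of standard self-attention for $A$, whereas the paper carries out the whole computation inline; relatedly, you need only Property~(2), while the paper's write-up invokes both Property~(1) and Property~(2) (which, as you note, are equivalent since $p_\sigma$ is orthogonal).
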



As the embedding layer is trivially $Aut(\mathcal{IG})$-equivariant, composing it with $Aut(\mathcal{IG})$-equivariant self-attention layers remains $Aut(\mathcal{IG})$-equivariant (Lemma~\ref{lemma:two_g_equivariant}).

\subsection{$Aut(\mathcal{IG})$-Invariant Predictor}
\label{subsec:ig_invariant_predictive_learning}

We describe two prediction modules that are inherently $Aut(\mathcal{IG})$-invariant, so stacking them on top of the $Aut(\mathcal{IG})$-equivariant self-attention layers leads to an $Aut(\mathcal{IG})$-invariant code model (Lemma~\ref{lemma:g_equivariant_and_invariant}).

\para{Token-level.} 
Token-level predictor is often employed when each input token needs a label, \eg predicting memory region per instruction (\S\ref{sec:experimental_setup}).
As the automorphism acts on the input sequence $e$ but not individual tokens, \ie the value of the embedding vectors, the automorphism $\sigma$ does not apply to the query vector $q_i$ (\S\ref{subsec:ig_invariant_code_analysis}).
Therefore, we have Lemma~\ref{lemma:ig_invariant_token_predictor}. 
See Appendix~\ref{app:proof} for complete proof.

\begin{lemma}
\label{lemma:ig_invariant_token_predictor}
    The biased self-attention computing the embedding $e'_i=GA(e_i)$ is $Aut(\mathcal{IG})$-invariant.
\end{lemma}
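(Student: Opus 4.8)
The plan is to obtain the claim almost for free from Theorem~\ref{theorem:ga_is_equivariant} by reading that equivariance identity one column at a time, and then to back it up with a direct unpacking of the per-token update that exposes \emph{why} it holds. First I would pin down conventions: with $e\in\mathbb{R}^{d\times n}$, the $i$-th column of $GA(e)$ --- the updated embedding $e'_i=GA(e_i)$ --- is the contraction of the value matrix $w_v e$ against the $i$-th column of the biased attention matrix $s(w_k e^{T}\cdot w_q e)+d_{\mathcal{IG}}$; concretely $e'_i=\sum_j v_j\, a_{ji}$, where $v_j=w_v e_j$ and $a_{ji}$ is the softmax-normalized coupling of key slot $j$ to query slot $i$, built from $\langle w_k e_j,\, w_q e_i\rangle+(d_{\mathcal{IG}})_{ji}$.

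Then I would invoke Theorem~\ref{theorem:ga_is_equivariant}: $GA(\sigma\cdot e)=\sigma\cdot GA(e)$, where $\sigma$ acts as right-multiplication by the permutation matrix $p_\sigma$, i.e. a permutation of columns. Reading off the column that $p_\sigma$ assigns to the original column $i$ gives $[GA(\sigma\cdot e)]_{\sigma(i)}=[GA(e)]_i=e'_i$. The left side is exactly the updated embedding received, inside the permuted interpretation graph, by the token originally labelled $i$; hence that token's embedding value is the same before and after $\sigma$ acts, which is precisely $Aut(\mathcal{IG})$-invariance of the per-token map $e_i\mapsto GA(e_i)$.

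To make the argument self-contained I would also spell out the direct computation sketched after the lemma. Under $\sigma$ the query $q_i=w_q e_i$ is untouched, because $\sigma$ only permutes \emph{which slot} carries the token $e_i$, never the vector $e_i$ itself; the family of (key, value) pairs $\{(w_k e_j, w_v e_j)\}_j$ is merely reindexed; and, since $\sigma$ is a graph automorphism, Property~1 of $d_{\mathcal{IG}}$ gives $(d_{\mathcal{IG}})_{ji}=(d_{\mathcal{IG}})_{\sigma(j)\sigma(i)}$, so the bias attached to the ordered pair (token $j$, token $i$) travels unchanged to the pair (token $\sigma(j)$, token $\sigma(i)$). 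Thus the multiset of triples $\{(w_k e_j,\, w_v e_j,\, (d_{\mathcal{IG}})_{ji})\}_j$ feeding the softmax-weighted sum that defines $e'_i$ is invariant, and because that sum (softmax over $j$, then a convex combination of the $v_j$) is symmetric in the summation index $j$, the output $e'_i$ does not change.

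The main obstacle I expect is not a calculation but a definitional one: making clear that "invariance of an individual token embedding" is consistent with Theorem~\ref{theorem:ga_is_equivariant}, which visibly permutes the embedding \emph{sequence}. The reconciliation --- equivariance of the sequence is the same statement as invariance of each token once tokens are tracked by identity (which interpreter $f_i$ they represent) rather than by position --- must be stated carefully so that no apparent contradiction remains. A secondary point to watch is the indexing of the bias term, $(d_{\mathcal{IG}})_{ji}$ versus $(d_{\mathcal{IG}})_{ij}$, and invoking Property~1 in exactly the form $(d_{\mathcal{IG}})_{ji}=(d_{\mathcal{IG}})_{\sigma(j)\sigma(i)}$, i.e. $\sigma(d_{\mathcal{IG}})=d_{\mathcal{IG}}$.
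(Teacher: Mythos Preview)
Your proposal is correct. Your second, direct argument is essentially the paper's own proof recast in coordinate-free language: the paper fixes the query $w_q e_i$, lets $\sigma$ act on the keys/values as $e\mapsto e p_{\sigma}$ and on the bias column as $d_i\mapsto p_{\sigma}^{T}d_i$, and then cancels $p_{\sigma}p_{\sigma}^{T}=I$ in the two summands $w_v e p_{\sigma}\cdot s(p_{\sigma}^{T}(w_k e)^{T}w_q e_i)$ and $w_v e p_{\sigma}\cdot p_{\sigma}^{T}d_i$. Your ``multiset of triples is invariant and the softmax-sum is symmetric in $j$'' is exactly this computation without matrices.

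Your first route---reading Theorem~\ref{theorem:ga_is_equivariant} columnwise and tracking the token by identity rather than by slot---is a genuine shortcut the paper does not take; the paper re-derives the per-token statement from scratch rather than as a corollary of the sequence-level equivariance. Your route buys economy (one line once Theorem~\ref{theorem:ga_is_equivariant} is in hand) and makes explicit the reconciliation you flag between sequence equivariance and token invariance; the paper's route is more self-contained and matches its stated framing that ``$\sigma$ does not apply to the query vector $q_i$''. Either is fine, and your care about the $(d_{\mathcal{IG}})_{ji}$ indexing and the precise form of Property~1 is well placed.
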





\para{Pooling-based.}
Another popular $Aut(\mathcal{IG})$-invariant predictor involves pooling the embedding sequence $e'=GA(e)$, \eg using max or mean.
Pooling operators are invariant to permutations, thus to $Aut(\mathcal{IG})$, \eg the mean pooling $\mu(e')=(\Sigma^n_{i=1}e'_i)/n$ is not sensitive to the order of $(e'_1,...,e'_n)$. 
Pooling-based predictor is often employed when we aim to predict the property for the entire input sequence, \eg predicting the function signature, detecting function similarity, etc. (\S\ref{sec:experimental_setup}).

\section{\sys Implementation}
\label{sec:impl}

This section elaborates on the design choices to implement $Aut(\mathcal{IG})$-invariant code analysis.
Figure~\ref{fig:symc-arch} shows the simplified steps of biasing the self-attention layers to be equivariant to the semantics-preserving permutation group.

\begin{figure}[!t]
    \centering
    \includegraphics[width=\linewidth]{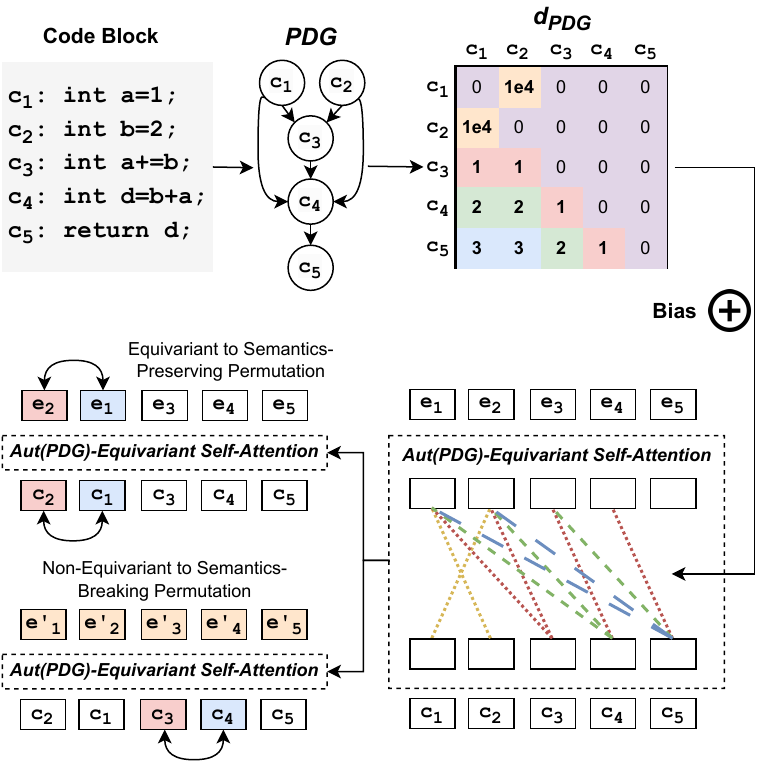}
    \caption{Simplified \sys architecture, which takes as input the code block and its program dependence graph (PDG) to construct the $Aut(P\!D\!G)$-equivariant self-attention head.}

    
    \label{fig:symc-arch}
\end{figure}


\subsection{Relaxing $\mathcal{IG}$ to Program Dependence Graph}
\label{subsec:pdg_symmetry}

\S\ref{subsec:ig_invariant_code_analysis} demonstrated how to build $Aut(\mathcal{IG})$-equivariant self-attention layers. 
However, directly constructing $\mathcal{IG}$ is computationally impractical as we need to enumerate all possible execution paths. 
To address this, we consider \emph{program dependence graph} (PDG), a sound over-approximation to $\mathcal{IG}$ that explicitly captures the control/data dependencies and can be computed statically and efficiently.
PDG ($V_{PDG}, E_{PDG}$) is a super graph of $\mathcal{IG}$ due to the conservative data flow analysis, sharing the same vertices but having a superset of edges ($E_{PDG} \supseteq E_{\mathcal{IG}}$). 
Enforcing PDG to be a super graph of $\mathcal{IG}$ is crucial because a subgraph's automorphism group is a subgroup of the super graph's automorphism group ($Aut(P\!D\!G) \supseteq Aut(\mathcal{IG})$). 
Thus, if the self-attention layer is $Aut(P\!D\!G)$-equivariant, it is guaranteed to be $Aut(\mathcal{IG})$-equivariant.

\para{PDG construction.} We construct PDG edges based on data and control dependencies between instructions. We consider three types of data dependencies: read-after-write, write-after-read, and write-after-write. Additionally, control dependencies are included to determine the execution order. These dependencies create a partial order of instructions, preventing permutations that would violate the edge directions and potentially alter the program's input-output behavior.(\S\ref{subsec:finding_program_symmetry}).

To identify the control and data dependencies, we employ a computationally efficient, conservative static analysis approach. 
For instance, we treat any two statements that access memory as dependent. 
However, this conservative approach may overlook potential symmetries. We aim to integrate more accurate analyses in the future, such as alias analysis through abstract interpretation or dynamic analysis, to reduce overapproximation while managing the trade-off of increased overhead. 
Interleaving standard dependency analysis with training/inference to minimize analysis overhead presents a promising direction for future exploration.

Implementing the static analysis to identify symmetries presents an additional challenge, as symmetries are defined at the level of tokens as seen by LLMs, whereas compilers and other program analysis tools typically perform data dependency analysis at the Intermediate Representation (IR) level. 
It requires significant engineering effort to correlate IR-level dependencies with source-level tokens and subtokens for the self-attention layers. 
Hence, for our initial prototype, 
we developed our analysis method based on the source-level ASTs (as described in \S\ref{app_subsec:implementation_details}) to simplify the mapping from the analysis results back to the source code tokens and subtokens. 
We note that integrating more sophisticated dependency analysis routines from existing compiler frameworks to identify symmetries is a promising area for future work.


\subsection{Encoding Graph Structure}
\label{subsec:encoding_graph_topology}

This section presents a concrete instance of the distance matrix defined on PDG, which enables us to prove $Aut(P\!D\!G)$-equivariance for the resulting self-attention layers.

\para{Distance matrix.} Let $d$ denote the distance matrix of PDG where $d_{ij}$ represents the distance between nodes $V_i$ and $V_j$. 
Each entry $d_{ij}$ is a 2-value tuple $(p_{ij}, n_{ij})$, indicating the longest path from the lowest common ancestor of $V_i$ and $V_j$, denoted as $T_{ij}$, to $V_i$ and $V_j$, respectively. 
We call $p_{ij}$ the positive distance and $n_{ij}$ the negative distance.

We incorporate $d$ into the Multi-Head Self-Attention (MHA) to ensure $Aut(P\!D\!G)$-equivariance with specific modifications to the attention heads to handle positive and negative distances.
Particularly, the first half of the attention heads $M\!H\!A^i(e)$, for $i\in [1,h/2]$, are combined with the matrix $dp$ formed by the positive distances in $d$ (denoted as $dp_{ij}=p_{ij}$). The second half of the attention heads $M\!H\!A^i(e)$, for $i\in [h/2+1,h]$, are combined with the matrix $dn$ formed by the negative distances in $d$ (denoted as $dn_{ij}=n_{ij}$). The modified attention heads are defined as:
(1) $M\!H\!A^i(e) = w_v e\cdot s(w_k e^T\cdot w_q e+dp), i\in [1,h/2]$, 
(2) $M\!H\!A^i(e) = w_v e\cdot s(w_k e^T\cdot w_q e+dn), i\in [h/2+1,h]$.

It is easy to show $d$ satisfies the two properties defined in \S\ref{subsec:ig_invariant_code_analysis} (see Appendix~\ref{app:proof}).
We thus have:

\begin{lemma}
\label{lemma:invariant_distance_matrix}
The distance matrix $d$ of PDG remains invariant under the action of $\sigma\in Aut(P\!D\!G)$.
\end{lemma}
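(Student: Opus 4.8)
\textbf{Proof proposal for Lemma~\ref{lemma:invariant_distance_matrix}.}

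The plan is to show that permuting the nodes of the PDG by an automorphism $\sigma \in Aut(PDG)$ leaves every entry of the distance matrix $d$ unchanged (up to the induced relabeling of indices), i.e.\ $d_{\sigma(i)\sigma(j)} = d_{ij}$ for all $i,j$, which is exactly what $\sigma(d) = d$ means when $\sigma$ acts by simultaneous row/column permutation. Recall that each entry $d_{ij} = (p_{ij}, n_{ij})$ is determined by two quantities: the lowest common ancestor $T_{ij}$ of $V_i$ and $V_j$, and the lengths of the shortest directed paths from $T_{ij}$ down to $V_i$ and to $V_j$. So it suffices to argue that each of these ingredients is preserved by $\sigma$.

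First I would establish the basic fact that any graph automorphism preserves directed shortest-path distances: since $\sigma$ is a bijection on $V$ carrying edges to edges and non-edges to non-edges (Definition~\ref{def:ig_automorphism}, applied to PDG), it maps any directed path $V_i \to \cdots \to V_j$ of length $k$ to a directed path $V_{\sigma(i)} \to \cdots \to V_{\sigma(j)}$ of the same length, and conversely using $\sigma^{-1}$ (which is also an automorphism); hence $\mathrm{dist}(V_i, V_j) = \mathrm{dist}(V_{\sigma(i)}, V_{\sigma(j)})$. Next I would show $\sigma$ maps lowest common ancestors to lowest common ancestors: $V_k$ is a common ancestor of $V_i$ and $V_j$ iff there are directed paths $V_k \to V_i$ and $V_k \to V_j$, a property transported verbatim by $\sigma$; and ``lowest'' is characterized by maximality of the path lengths from the ancestor (or minimality over common ancestors), again a shortest-path condition preserved by the previous step. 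Therefore $\sigma(T_{ij}) = T_{\sigma(i)\sigma(j)}$. Combining, $p_{\sigma(i)\sigma(j)} = \mathrm{dist}(T_{\sigma(i)\sigma(j)}, V_{\sigma(i)}) = \mathrm{dist}(\sigma(T_{ij}), V_{\sigma(i)}) = \mathrm{dist}(T_{ij}, V_i) = p_{ij}$, and identically for the negative component $n_{ij}$, giving $d_{\sigma(i)\sigma(j)} = d_{ij}$.

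The main obstacle I anticipate is making the ``lowest common ancestor'' notion well-defined and stable on a general directed graph with cycles (the PDG contains loops, and the paper explicitly ignores loop iteration counts when building edges): in a DAG the LCA is standard, but here I would need to pin down precisely which definition of $T_{ij}$ the implementation uses (e.g.\ with respect to a spanning/dominator tree or a canonical topological-like structure) and check that $\sigma$ preserves \emph{that} structure, not merely the raw reachability relation. One clean route is to note that whatever canonical auxiliary structure is used to define $T_{ij}$ must itself be a graph-theoretic invariant computed from PDG, so $\sigma$ necessarily commutes with its construction; I would state this as the precise hypothesis on $d$ and defer the full construction to \S\ref{subsec:encoding_graph_topology}. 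The remaining work—translating ``$d_{\sigma(i)\sigma(j)} = d_{ij}$ for all $i,j$'' into the matrix identity $p_\sigma^T d\, p_\sigma = d$, equivalently $\sigma(d) = d$ in the notation of \S\ref{subsec:ig_invariant_code_analysis}—is routine bookkeeping with permutation matrices and I would relegate it to the appendix.
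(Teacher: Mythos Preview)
Your proposal is correct and follows essentially the same route as the paper: both arguments hinge on the fact that an automorphism $\sigma$ (and its inverse $\sigma^{-1}$) carries directed paths to directed paths of equal length, so shortest-path distances are preserved; the paper phrases this as a contradiction argument while you give the direct bijection, but the content is identical. Your explicit step showing $\sigma(T_{ij}) = T_{\sigma(i)\sigma(j)}$ is in fact more careful than the paper's own proof, which silently identifies $p_{\sigma(i)\sigma(j)}$ with the shortest path from $\sigma(T_{ij})$ rather than from $T_{\sigma(i)\sigma(j)}$; your caution about the LCA being well-defined on a cyclic PDG is likewise a legitimate point the paper does not address.
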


\begin{lemma}
\label{lemma:comutative_distance_matrix}
The distance matrix $d$ of PDG commutes with permutation matrix $p_{\sigma}$ of the automorphism $\sigma\in Aut(P\!D\!G)$: $d\cdot p_{\sigma}=p_{\sigma}\cdot d$.
\end{lemma}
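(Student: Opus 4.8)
The plan is to show that $d \cdot p_\sigma = p_\sigma \cdot d$ by comparing the two products entry-by-entry and using the fact that $\sigma$ is a graph automorphism of PDG together with Lemma~\ref{lemma:invariant_distance_matrix}. First I would fix notation: let $p_\sigma \in \{0,1\}^{n\times n}$ be the permutation matrix of $\sigma$, so that $(p_\sigma)_{ij} = 1$ exactly when $j = \sigma(i)$ (matching the convention in \S\ref{subsec:ig_invariant_code_analysis} that right-multiplying permutes columns). Then the $(i,j)$ entry of $d \cdot p_\sigma$ is $\sum_k d_{ik} (p_\sigma)_{kj} = d_{i,\sigma^{-1}(j)}$, while the $(i,j)$ entry of $p_\sigma \cdot d$ is $\sum_k (p_\sigma)_{ik} d_{kj} = d_{\sigma(i),j}$. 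So the claim reduces to the identity $d_{i,\sigma^{-1}(j)} = d_{\sigma(i),j}$ for all $i,j$, or equivalently, after substituting $j \mapsto \sigma(j)$, the statement $d_{\sigma(i),\sigma(j)} = d_{i,j}$ for all $i,j$.

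The identity $d_{\sigma(i),\sigma(j)} = d_{i,j}$ is precisely the statement that the distance matrix is preserved under relabeling nodes by $\sigma$, which is what Lemma~\ref{lemma:invariant_distance_matrix} established: the action of $\sigma$ on $d$ sends the $(i,j)$ entry to the $(\sigma(i),\sigma(j))$ entry, and invariance says this equals the original $(i,j)$ entry. (If one prefers a self-contained argument: since $\sigma$ is an automorphism, it maps the lowest common ancestor $T_{ij}$ of $V_i,V_j$ to the lowest common ancestor $T_{\sigma(i)\sigma(j)}$ of $V_{\sigma(i)},V_{\sigma(j)}$, and it maps shortest directed paths to directed paths of the same length in both directions; combined with $\sigma^{-1} \in Aut(PDG)$ to rule out a strictly shorter image path, the positive and negative distances are preserved, i.e. $(p_{\sigma(i)\sigma(j)}, n_{\sigma(i)\sigma(j)}) = (p_{ij}, n_{ij})$.) Substituting this back gives $d_{i,\sigma^{-1}(j)} = d_{\sigma(i),j}$, hence $d \cdot p_\sigma = p_\sigma \cdot d$.

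The main obstacle, such as it is, is purely bookkeeping: getting the direction of the permutation-matrix convention right so that the column-permutation $d \cdot p_\sigma$ and the row-permutation $p_\sigma \cdot d$ line up with the index shifts $\sigma^{-1}(j)$ and $\sigma(i)$ respectively, and checking that the reduction really is to $d_{\sigma(i),\sigma(j)} = d_{i,j}$ rather than to something with a stray inverse. Once that is pinned down, the substance is entirely carried by Lemma~\ref{lemma:invariant_distance_matrix}, so I would keep this proof short and simply cite that lemma after performing the index computation. I expect the full proof to be only a few lines: expand both matrix products, rewrite as an entrywise identity, invoke Lemma~\ref{lemma:invariant_distance_matrix}, done.
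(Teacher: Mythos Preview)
Your proposal is correct and rests on the same key fact as the paper's proof, namely Lemma~\ref{lemma:invariant_distance_matrix}. The only difference is presentational: you expand both products entrywise and reduce to $d_{\sigma(i),\sigma(j)}=d_{i,j}$, whereas the paper writes the invariance of Lemma~\ref{lemma:invariant_distance_matrix} directly as the matrix identity $p_\sigma^T\, d\, p_\sigma = d$, left-multiplies by $p_\sigma$, and uses orthogonality ($p_\sigma p_\sigma^T = I$) to obtain $d\cdot p_\sigma = p_\sigma\cdot d$ in three lines. Your version has the advantage of making the index bookkeeping explicit (and your caution about the permutation-matrix convention is well placed), while the paper's version is shorter and avoids the entrywise computation altogether; substantively they are the same argument.
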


Based on these two properties, we can prove each head in MHA is $Aut(P\!D\!G)$-equivariant, following the same proof steps to Theorem~\ref{theorem:ga_is_equivariant}.
Therefore, according to Lemma~\ref{lemma:two_g_equivariant}, MHA composed by multiple $Aut(P\!D\!G)$-equivariant heads is also $Aut(P\!D\!G)$-equivariant.

\section{Experimental Setup}
\label{sec:experimental_setup}


This section briefly describes the evaluation tasks and code transformations.
We put the detailed description, \eg baselines, evaluation dataset and metrics, etc., in Appendix~\ref{app:detailed_setup}.

\para{Task selection.}
We consider \emph{program analysis} tasks that take as input the program code and output different program properties (see below).
These tasks require comprehending code semantics and behavior, so the model performing these tasks is expected to stay invariant to code symmetries.
We \emph{do not consider code generation tasks}, where the input to the program is natural language or other formal specifications.
We leave the study of enforcing code symmetry constraints in the output as the future work.

Specifically, we consider (1) \emph{function name prediction}, which performs an ``extreme summarization'' of the function behavior. 
(2) \emph{defect prediction}, which detects whether a code block has an error.
(3) \emph{function similarity detection}, which predicts if a pair of functions are semantically similar;
(4) \emph{function signature prediction}, which predicts the types (\texttt{int}, \texttt{float}, etc.) of function arguments;
and (5) \emph{memory region prediction}, which predicts the memory region (stack, heap, etc.) that each memory-accessing instruction can possibly access.
For (3)-(5), we focus on analyzing stripped binaries considering is broad applications in security, \eg vulnerability detection and security retrofitting.

\para{Code transformations.}
We consider a set of real-world semantics-preserving transformations beyond PDG automorphisms to evaluate \sys's generalization by staying $Aut(P\!D\!G)$-equivariant.
Instruction permutation occasionally forms the basis for these transformations.
In particular, we consider two categories of binary code transformations: (1) \emph{compiler optimizations} from GCC-7.5 and Clang-8, some of which reorder instructions for scheduling purposes (\texttt{-fdelayed-branch}, \texttt{-fschedule-insns}); and (2) \emph{compiler-based obfuscations}, where we consider 5 obfuscations following \citet{jin2022symlm}, such as control flow flattening, indirect branching, etc.

In addition to binary transformations, we consider six source code transformations following \citet{rabin2021generalizability} and \citet{wang2022recode}:
\emph{variable rename} (VR) - changes the identifier names;
\emph{statement permute} (SP) -- semantics-preserving permutations;
\emph{loop exchange} (LX) -- switches \code{for} and \code{while};
\emph{boolean exchange} (BX) -- flips the boolean variables and negates all their uses by tracking their def-use chain;
\emph{unused statement} (US) -- injects unused string declaration into a randomly chosen basic block;
\emph{switch to if} (SI) -- transforms \code{switch} from/to \code{if} statements.

\section{Evaluation}
\label{sec:eval}

\begin{table}[!t]

\setlength{\tabcolsep}{5pt}
\renewcommand{\arraystretch}{1}

\caption{Comparing \sys to baselines against semantics-preserving permutations. We include the F1 score for both prediction tasks before and after the testing samples are permuted. The violation rate measures how many samples get their labels changed after permutation. }


\label{tab:unseen-permute}

\begin{center}
\begin{tabular}{lllll}
\toprule[1.1pt]
{\bf Model} & {\bf Size} & {\bf Before} & {\bf After} & {\bf Violate} \\ \midrule[.9pt]
\multicolumn{5}{l}{\it Function Name Prediction} \\ \hdashline[3pt/5pt]
\textbf{\sys (ours)} & \textbf{68.4M} & \textbf{36.3} &  \textbf{36.3} & \textbf{0\%}  \\
code2seq &  6.3M   & 25.5 & 24.7  & \cellcolor{lightred-darkest}61\% \\
code2vec &   348M  & 17.7 & 19.6 & \cellcolor{lightred-darkest}52\% \\
CodeLlama & 7B & 31.7 & 31.4  & \cellcolor{lightred-medium}18\% \\
CodeT5    & 770M & 25.4 & 25.4  & \cellcolor{lightred-medium}16\%\\
DOBF    & 428M & 16.3 & 20.1  & \cellcolor{lightred-darkest}41\%\\
GGNN     &  53M   & 1.6 & 1.6 & \cellcolor{lightred-lightest}7\% \\
GPT-4    & N/A & 30.3 & 30.7  & \cellcolor{lightred-darkest}43\%\\
GraphCodeBERT    & 481M & 20.8 & 20.6  & \cellcolor{lightred-darkest}31\%\\
WizardCoder & 3B & 33.9 & 34.6  & \cellcolor{lightred-medium}14\% \\ \midrule[.9pt]
\multicolumn{5}{l}{\it Defect Prediction} \\ \hdashline[3pt/5pt]
\textbf{\sys (Ours)} & \textbf{67.7M} & \textbf{68.8} & \textbf{68.8} & \textbf{0\%} \\
CodeBERT & 476M & 62.2 & 61.7 & \cellcolor{lightred-medium}4.1\% \\
CodeLlama & 7B & 51.03 & 51.39 & \cellcolor{lightred-medium}3.4\%\\
CodeT5 & 770M & 63.3 & 60  & \cellcolor{lightred-darkest}6\% \\
DOBF & 428M & 62.4 & 61.5  & \cellcolor{lightred-darkest}2.7\% \\
GPT-4    & N/A & 51.56 & 50  & \cellcolor{lightred-darkest}13.5\%\\
GraphCodeBERT & 481M & 61.7 & 61.7  & \cellcolor{lightred-lightest}1.3\% \\
UnixCoder & 504M & 67.1 & 67.1  & \cellcolor{lightred-medium}2.9\% \\ 
WizardCoder & 3B & 49.24 & 49.24  & 6.8\% \\ 
\bottomrule[1.1pt]
 
\end{tabular}
\end{center}
\end{table}

\subsection{Invariance and Generalization}
\label{subsec:rq1-rq2}

\para{Evaluating $Aut(P\!D\!G)$-invariance.}
As \sys is provably invariant to $Aut(\!P\!D\!G)$, we aim to study how other baselines perform under varying percentages of semantics-preserving statement permutations. 
Table~\ref{tab:unseen-permute} shows that all baselines, even having much larger model sizes and extensively pre-trained, are susceptible to slight permutations, \ie with their prediction changed by 31.4\% on average for function name prediction.
On defect prediction, \sys outperforms the pre-trained baselines by 11.4\% on average; even we already fine-tuned the open-sourced ones, \eg CodeT5, using the same samples and steps as \sys, while \sys is trained from scratch.
Table~\ref{tab:unseen-permute-binary} shows \sys outperforms the state-of-the-art baseline, PalmTree, by 67.5\% and remains robust across all semantics-preserving permutations.

\begin{table*}[!t]

\setlength{\tabcolsep}{8pt}
\renewcommand{\arraystretch}{1}

\caption{Comparing \sys to binary analysis baselines against semantics-preserving permutations. Function signature and memory region prediction are measured in F1. Function similarity detection is measured using AUC (area under the ROC curve). Similar to Table~\ref{tab:unseen-permute}, the magnitude of the violation rate is highlighted in \colorbox{lightred-medium}{red}. }


\label{tab:unseen-permute-binary}

\begin{center}
\begin{tabular}{l|lll|lll|lll}
\toprule[1.1pt]
\multirow{2}{*}{\bf Model} & \multicolumn{3}{c|}{\bf Function Signature} & \multicolumn{3}{c|}{\bf Memory Region} & \multicolumn{3}{c}{\bf Function Similarity} \\ 
 & Before & After & Violate & Before & After & Violate & Before & After & Violate  \\ \midrule[.9pt]
\textbf{\sys} & \textbf{0.88} & \textbf{0.88} & \textbf{0\%} & \textbf{0.86} & \textbf{0.86} & \textbf{0\%} & \textbf{0.96} & \textbf{0.96} & \textbf{0\%} \\
\pto & 0.59 & 0.41 & \cellcolor{lightred-darkest}24\% & 0.57 & 0.43 & \cellcolor{lightred-medium}18\% & 0.72 & 0.69  & \cellcolor{lightred-darkest}31\% \\
\pts & 0.49 & 0.41 & \cellcolor{lightred-lightest}6\% & 0.57 & 0.44 & \cellcolor{lightred-medium}11\% & 0.8 & 0.72  & \cellcolor{lightred-darkest}35\% \\
\ptu & 0.19 & 0.41 & \cellcolor{lightred-darkest}86\% & 0.32 & 0.2 & \cellcolor{lightred-darkest}32\% & 0.71 & 0.72  & \cellcolor{lightred-darkest}38\% \\ \bottomrule[1.1pt]
 
\end{tabular}
\end{center}
\end{table*}

\para{Generalization to unseen transformations.}
Figure~\ref{fig:unseen-source} shows that \sys's performance on new samples introduced by unseen semantics-preserving transformations beyond permutations, outperforming the model specialized for this task, \eg code2seq, by 30.8\%.
It also outperforms the two LLMs, GPT-4 and WizardCoder, by 16.1\% and 1.63\%, respectively.
However, we observe that Code Llama outperforms \sys by 10\%, although \sys achieves better results on statement permutation (SP).
We attribute this to Code Llama's better understanding of natural language due to its extensive pre-training on both code and text, which is especially beneficial for function name prediction. 
Nonetheless, \sys retains the edge of having a much smaller model size (104.8$\times$) without the need of any pre-training.

Besides the source-level transformations, we compare \sys to baselines on more sophisticated code transformations introduced by unseen compiler optimizations and obfuscations. 
Figure~\ref{fig:unseen-opt-obf} shows that \sys outperforms \pto (see Appendix~\ref{app:detailed_setup}) across all binary analysis tasks (we exclude memory region prediction as the dataset does not have this categorization) by 33.8\% and 30.7\% on seen and unseen transformations, respectively. 
While the compiler optimizations and obfuscations often involve more sophisticated transformations not directly related to instruction permutations, \sys maintains its superior generalization.

\begin{figure}[!t]
    \centering
    
    \includegraphics[width=.9\linewidth]{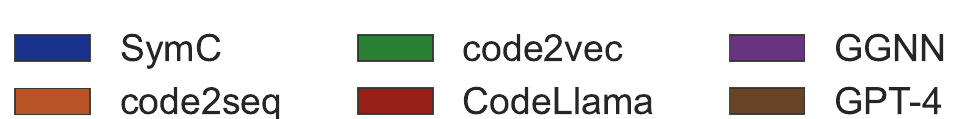}
    
    \includegraphics[width=\linewidth]{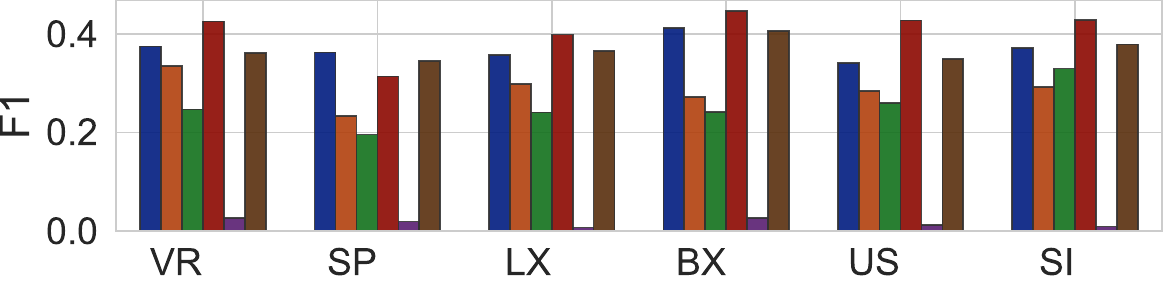}
    \caption{The performance (F1) of \sys and baselines against different unseen code transformations defined in \S\ref{sec:experimental_setup}.}

    
    \label{fig:unseen-source}
\end{figure}

\begin{figure*}[!t]
\centering

\includegraphics[width=.7\linewidth]{./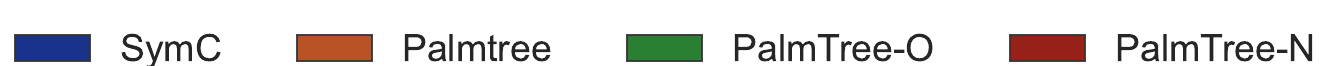}

\subfloat[Cross-OPT similarity]{
\includegraphics[width=0.24\linewidth]{./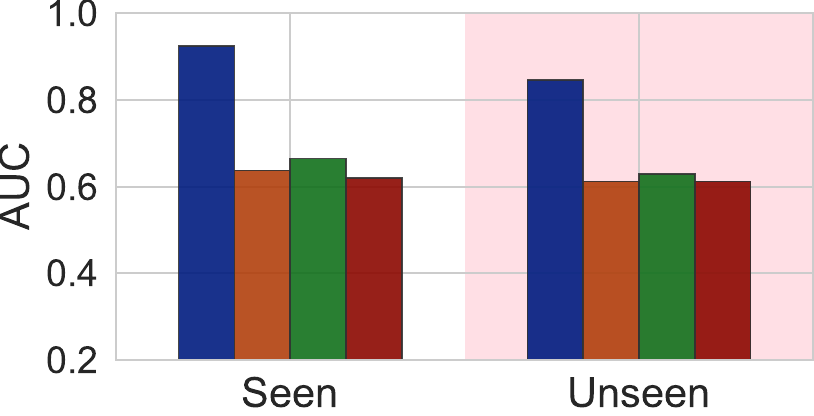}
\label{subfig:opt-sim}}
\subfloat[Cross-OPT signature]{
\includegraphics[width=0.24\linewidth]{./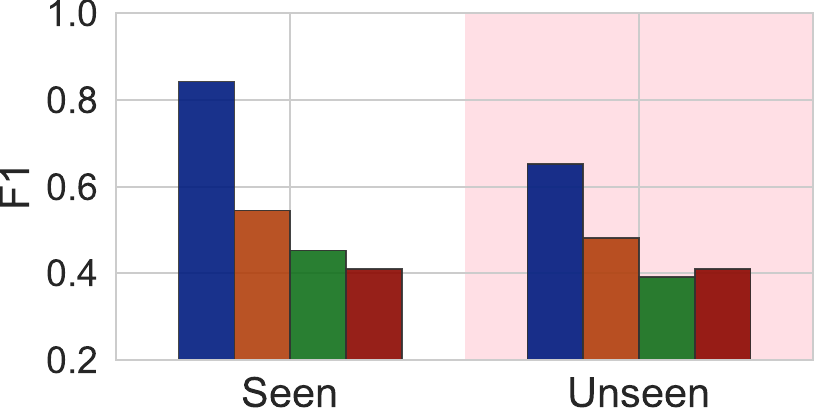}
\label{subfig:opt-sig}}
\subfloat[Cross-OBF similarity]{
\includegraphics[width=0.24\linewidth]{./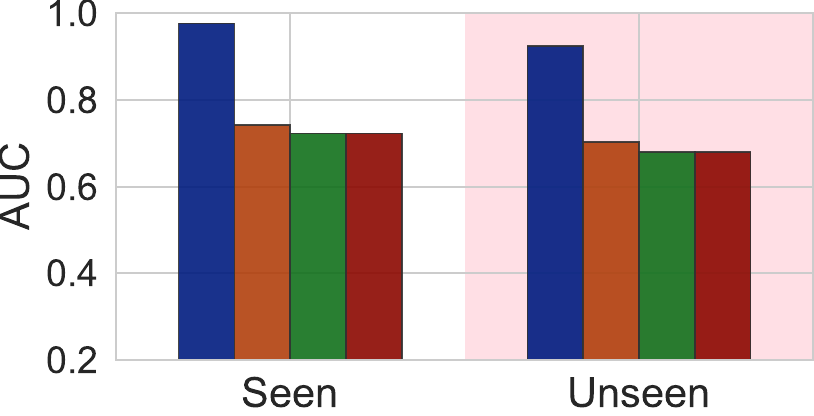}
\label{subfig:obf-sim}}
\subfloat[Cross-OBF signature]{
\includegraphics[width=0.24\linewidth]{./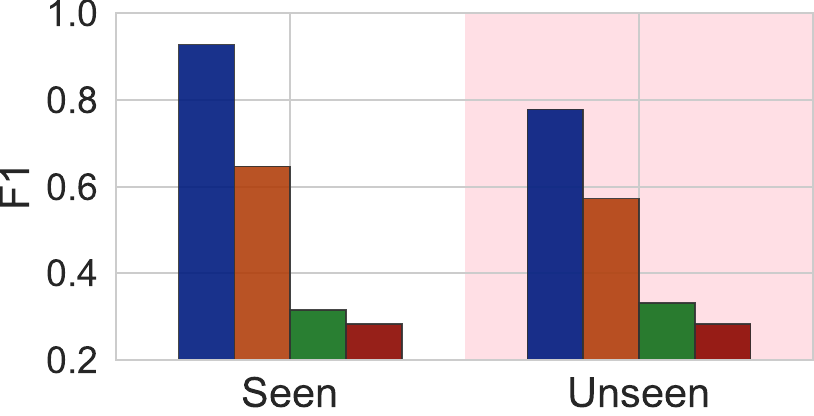}
\label{subfig:obf-sig}}

\caption{Evaluation on unseen optimization and obfuscation (marked in \colorbox{lightred-medium}{pink}). We also include the testing results on \emph{seen} optimizations and obfuscations (but the testing samples are non-overlapping with the training) on the left. }

\label{fig:unseen-opt-obf}
\end{figure*}






\subsection{Training Efficiency}
\label{subsec:rq3}

Besides the improved robustness and generalization, \sys is efficient in avoiding expensive training efforts, \eg some may take up to 10 days~\citep{jin2022symlm}. 
As shown in \S\ref{subsec:rq1-rq2}, \sys, without any pre-training, outperforms the pre-trained baselines.

    
    
    

Therefore, in this section, we aim to study \sys's performance under the limited training resources.
Figure~\ref{fig:resource-constraints} shows that \sys's performance (on memory region prediction) remains the highest when we reduce the model size and training iterations, outperforming \pto by 36.9\% and 21.4\%, respectively.
Even in the most strict scenario, \sys remains 38.2\% and 15.3\% better in both settings.

\begin{figure}[!t]
        \centering
        \includegraphics[width=\linewidth]{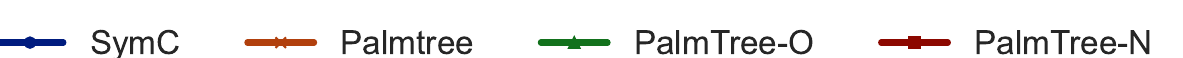}
        
        \subfloat[Reduce model size]{
        \includegraphics[width=0.49\linewidth]{./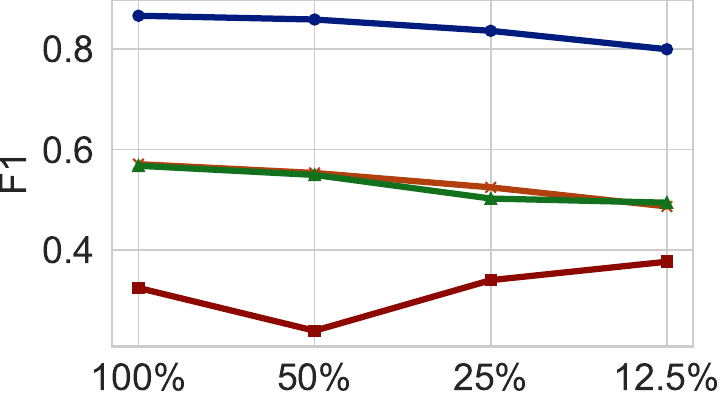}
        \label{subfig:reduce-weights}}
        \subfloat[Reduce training iterations]{
        \includegraphics[width=0.47\linewidth]{./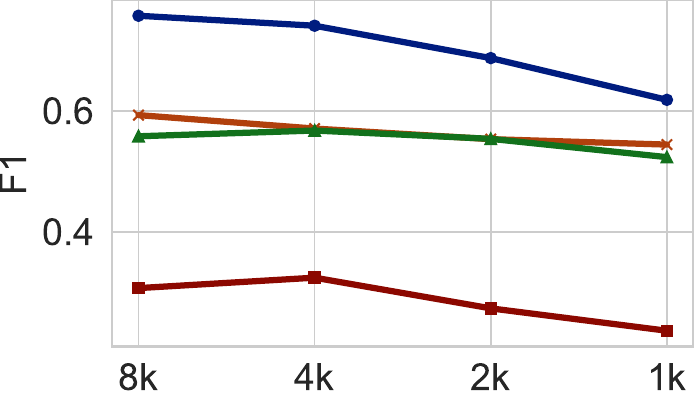}
        \label{subfig:reduce-iterations}}

        
        \caption{Comparing \sys and baselines when we (a) reduce the model weights, and (b) reduce the number of training iterations, and observe how that affects the performance.}

        
        \label{fig:resource-constraints}
\end{figure}%

    
    
    
        

\subsection{Ablations}
\label{subsec:rq4}

\para{Equivariance vs. Invariance.}
We compare the $Aut(P\!D\!G)$-\emph{equivariant} self-attention layers to the $Aut(P\!D\!G)$-\emph{invariant} ones, an alternative design choice to implement $Aut(P\!D\!G)$-\emph{invariant} code models. 
Figure~\ref{subfig:equivariant_invariant} shows that setting layers invariant early hinders prediction performance.
\sys with equivariant layers has an average 0.73 F1 across all training iterations and outperforms the second-best setting by 60.7\%.
This observation confirms the empirical findings that making earlier layers equivariant instead of invariant leads to better performance~\citep{higgins2018towards}.

\para{Adding pre-training.}
We explore the impact of pre-training \sys with masked language modeling~\citep{devlin2018bert}. 
We compare \sys (without pre-training by default) to pre-trained versions (and then fine-tuned them for memory region prediction) with varying pre-training iterations. 
Figure~\ref{subfig:pretraining} shows that pre-training with even one epoch results in a significantly improved F1 score, \eg by 10.8\%, with much faster convergence. 
However, additional pre-training epochs show diminishing returns, likely due to the limited training samples, \eg the F1 score only improves by 3.2\% with pre-training five epochs compared to 1 epoch.

\para{Non-$Aut(P\!D\!G)$-equivariant baselines.}
We aim to confirm the performance of \sys comes from preserving the symmetry, as opposed to the increased capacity from the distance matrix in the self-attention layers.
To test this hypothesis, we compare \sys with the baselines using the same exact architecture, but only changing the distance matrix whose entries are (1) kept exactly the same (\ie fully permutation equivariant baseline), and (2) derived from the relative distance between the monotonically increasing positions (\ie non-equivariant relative positional embedding).

\begin{figure*}[!t]
  \begin{center}
    \centering

    \subfloat[Equivariant vs. invariant]{
    \includegraphics[width=.28\linewidth]{./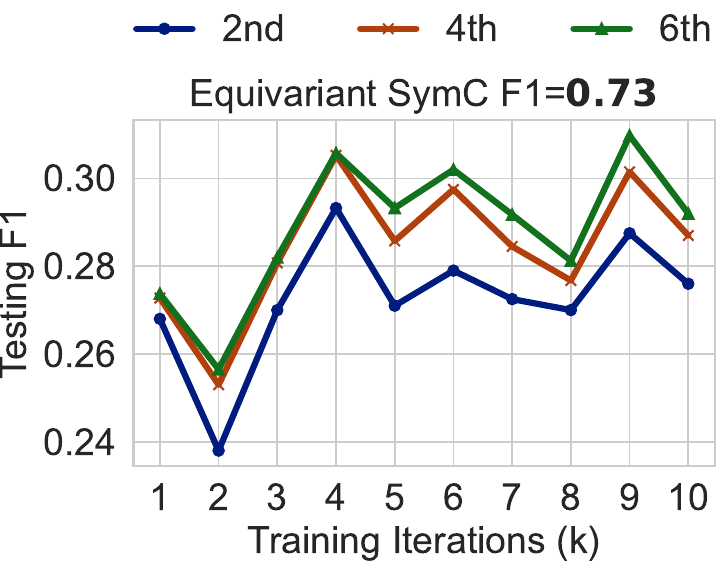}
    \label{subfig:equivariant_invariant}}
    \subfloat[Pre-training \sys]{
    \includegraphics[width=.34\linewidth]{./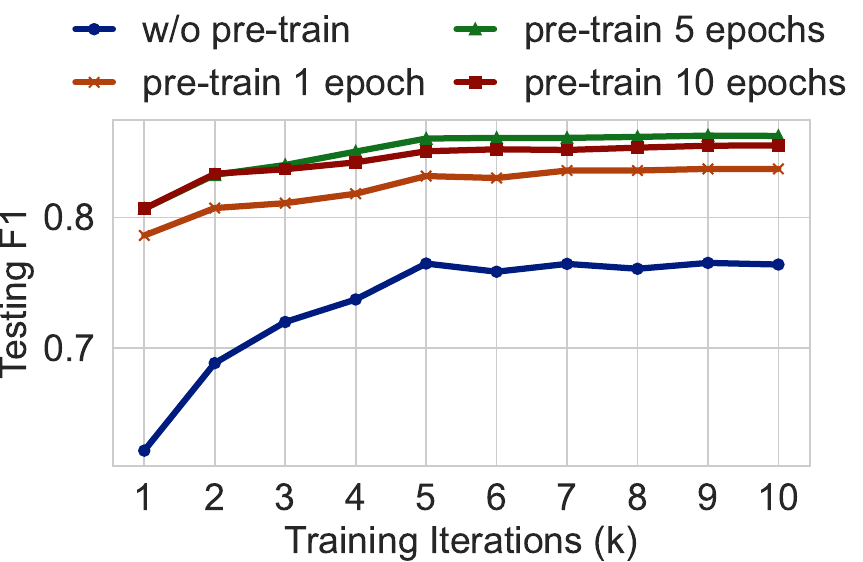}
    \label{subfig:pretraining}}
    \subfloat[Non-$Aut(P\!D\!G)$-equivariant baselines]{
    \includegraphics[width=.35\linewidth]{./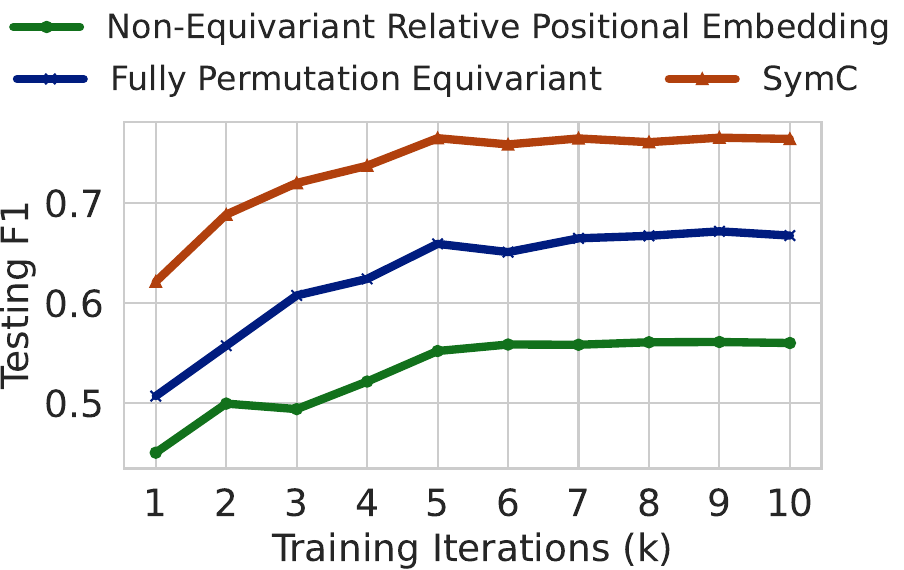}
    \label{subfig:pos}}

    \caption{(a) Setting \sys's self-attention layers \emph{invariant} in earlier layers. (b) Pre-training \sys with varying pre-training epochs. (c) Comparing the $Aut(PDG)$-equivariant layers with the fully permutation equivariant ones (distance matrix with identical values) and non-equivariant ones (default relative position embeddings~\cite{2020t5, wang2021codet5}).}
    
    
    \label{fig:ablations}
  \end{center}
\end{figure*}

Figure~\ref{subfig:pos} shows the comparison of testing F1 score by training the three models on memory region prediction tasks.
Note that both baselines adopt the same shape of the distance matrix as SymC, so their number of parameters is identical with \sys as well. 
The results demonstrate that SymC outperforms the fully permutation equivariant one and the non-equivariant one by 14.5\% and 36.4\%, respectively.

\section{Limitations and Future Work}
\label{sec:discussion}

\para{Overhead.} 
\sys incurs additional overhead in generating PDG for each sample, though we have put reasonable engineering effort into making graph construction relatively cheap (Appendix~\ref{subsec:efficiency}, Figure~\ref{fig:runtime-overhead}), and we can further optimize the graph construction by interleaving it with the training/inference to further improve the efficiency. 

In essence, \sys imposes the equivariance constraints \emph{regardless of training}. 
This implies that a slight finetuning to help the model simply adapt to the symmetry constraints while specializing for downstream tasks would be likely enough~\cite{basu2023equi}. 
This is exactly why \sys can obtain strong results by training the model from scratch using the typical fine-tuning tasks (not next token prediction or masked language modeling).

\para{Other code symmetries.}
Our current framework focuses on the permutation group, but it extends to all transformations as long as they form a group. 
For example, variable name renaming is also a permutation over the entire vocabulary. 
SymC can also be readily applied to token permutation, \eg \code{a=a+1} to \code{a=1+a} is a symmetry with a proper type inference, \eg ensuring \code{a} is not a string and \code{+} is not a string concatenation operator. 

However, the expressivity of the code symmetry can be restrictive to certain transformations. 
For example, insertion and deletion operations (\eg deadcode elimination) are not invertible. 
Therefore, a new formalism, \ie semigroup~\cite{hille1996functional} which relaxes the requirement of invertibility (\S\ref{sec:preliminaries}), is required.
Moreover, it is often an expensive manual effort to identify the symmetry group structure for arbitrary code transformations. 
For example, it is unclear how a compiler optimization pass, \eg gcc O3, can be expressed using a sequence of insertion, deletion, and permutation operations. 
It would be an interesting future direction to uncover the unknown code symmetry from these transformations~\cite{huh2024discovering}.

\section{Related Work}
\label{sec:related}

\para{Code representation learning.}
Previous research aims to automate software development tasks through code representation learning~\citep{ding2023traced, feng2020codebert, guo2022unixcoder, ahmad2021unified}.
Typical methodologies involve employing new model architectures and pre-training objectives with the goal of learning code representation that can be reused for other downstream program analysis and understanding tasks~\citep{hellendoorn2019global, bieber2020learning, pei2020trex,  pei2021stateformer, allamanis2017learning, sun2020treegen, peng2021integrating, kim2021code, guo2020graphcodebert}. 
However, unlike our approach, these approaches do not provide any guarantees that the underlying models can be robust against any semantics-preserving transformations including simple permutations. 

An alternative popular setting is to ground the program with its \emph{execution behavior} so it learns semantics-aware code representations~\cite{wen2024grounding, pei2020trex, ding2023traced, nye2021show, souza2023lexecutor, wang2020blended, bieber2022static, ye2022neural}.
One caveat of incorporating execution in \sys is that the dynamic traces are often incomplete to expose all possible code behaviors. 
Therefore, it might suffer from identifying false symmetries.
However, as conservative static analysis can miss many true symmetries, dynamic traces can assist the analysis in canonicalizing transformations that are hard to reason about statically. 
In our current formulation based on PDG, we only consider symmetries for \emph{all} input, but it can be relaxed to a subset of inputs, which is an exciting future work.

\para{Symmetry in deep learning.}
Symmetry plays a crucial role in creating efficient neural architectures in various domains~\citep{reiser2022graph, wang2020incorporating, bogatskiy2020lorentz, perraudin2019deepsphere, cohen2016group, gordon2019permutation, dehmamy2021automatic}. 
Different architectures, such as CNNs, GNNs, and Transformers, leverage symmetry as the inductive bias to build model architectures robust to geometric transformations such as translations, rotations, permutations, etc.~\citep{lee2019set, cohen2016group, esteves2018learning, hutchinson2021lietransformer, gordon2019permutation, romero2020group}. 
\sys sets the first step to formalize code semantics learning using symmetry groups. 

Recent studies have explored various strategies beyond symmetry groups to encode the geometric properties of input, aiming to enhance the model's robustness, generalization, and sample efficiency. 
These efforts include integrating symbolic operations into the neural architecture~\cite{chaudhuri2021neurosymbolic, li2023scallop, kim2017structured, hu2016harnessing} and designing specialized loss functions~\cite{fort2019deep, basu2023equi, huh2024discovering}.
Neural-symbolic approaches are particularly appealing for modeling programs, as programs typically encompass symbolic elements, such as syntactic and semantic rules, and fuzzy elements, such as natural language comments and function/variable names.
However, developing differentiable components for symbolic rules might require extensive effort from the domain expert~\cite{yi2018neural, garcez2022neural}. 
\sys represents a new alternative to express program semantics in a way that is amenable to engineering the neural architecture, thanks to the rich literature in the domain of geometric deep learning~\cite{bronstein2017geometric}.
By representing program semantics as code symmetry groups, our approach holds the promise of enhancing inference efficiency by bypassing the interpretation of symbolic rules, which often face scalability challenges in traditional program analysis.

\section{Conclusion}
\label{sec:conclusion}

We studied code symmetries' impact on code LLM architectures for program reasoning.
We introduced a novel self-attention variant that guarantees equivariance against code symmetries based on the permutation group.
We implement \sys and evaluate against various semantics-preserving transformations across different program analysis tasks, demonstrating the improved generalization in reasoning and analyzing programs.

\section*{Acknowledgement}

We thank the anonymous reviewers for their constructive comments and feedback, which significantly improved this paper.
This work was supported in part by NSF grants CNS-2154874, CNS-1564055, ONR grant N00014-17-1-2788, an NSF career award, a Google ASPIRE Award, multiple Google Cyber NYC awards, Columbia SEAS/EVPR Stimulus award, Columbia SEAS-KFAI  Generative AI and Public Discourse Research award, and Accenture.

\section*{Impact Statement}
Security-critical program analysis tools like those used for vulnerability detection and malware analysis have increasingly relied upon advanced machine learning, such as Large Language Models (LLMs), for improved efficiency, precision, and automation.
However, incorrect predictions from learned code analysis impede the deployment and introduce more vulnerabilities in the deployed systems.
This paper develops a new language to represent program semantics in a way that is amenable to being enforced in LLM architectures.
Different from the philosophy of LLMs where all the rules are expected to be learned in an entirely data-driven manner, we want to emphasize the formal aspects of our approach to provably guarantee that the LLMs follow the precisely defined rules.
Overall, this paper aims to advance generative AI technologies such as LLMs to make them trustworthy for high-assurance programming systems.

\bibliography{paper}
\bibliographystyle{icml2024}

\appendix

\section{Detailed Preliminaries}
\label{app_sec:preliminaries}


This section formally defines the symmetry group and the invariance and equivariance properties against the symmetry group.

\para{Symmetry group.}
Intuitively, a \emph{symmetry} is a transformation or operation on an object that preserves certain properties of the object. 
For example, in the context of image classification, a rotation operation acting on an image of a ball, which does not change the label of the ball, can be considered a symmetry. 
A symmetry group is a set of such symmetries with some additional properties. 
An arbitrary set of symmetries does not always form a symmetry group. 
To form a symmetry group, a set of operations must possess certain additional properties as described below.

\begin{definition}
\label{def:group}

A \emph{symmetry group} $(G,\circ)$ consists of a non-empty set $G$ of transformations and a binary operator $\circ:G\times G\rightarrow G$, where $\circ$ operates on two elements (\ie transformations) in $G$, \eg $x,y\in G$, and produces a new transformation $z=x\circ y$.
$(G,\circ)$ should satisfy four axioms:

\begin{itemize}[leftmargin=*]
    \item \textbf{Associativity}: $\forall x,y,z\in G, x\circ (y\circ z)=(x\circ y)\circ z$
    \item \textbf{Identity}: $\exists \mathbf{1}\in G, \forall x\in G, x\circ \mathbf{1}=\mathbf{1}\circ x$
    \item \textbf{Inverse}: $\forall x\in G, \exists x^{-1}\in G, x\circ x^{-1}=\mathbf{1}$
    \item \textbf{Closure}: $\forall x,y\in G, x\circ y\in G$
\end{itemize}

\end{definition}

\para{Action of a symmetry group.} As defined above, the elements of a $G$ are abstract transformations that become concrete when they \emph{act} on some set $X$, \ie they transform some object $x\in X$ into another object $x'\in X$ while keeping some properties of the object \emph{invariant}. Formally, an action of a symmetry group $G$ is defined as follows:

\begin{definition}
An {\bf action} $\bullet$ of a symmetry group $(G, \circ)$ is a binary operation defined on a set of objects $X$, \ie $\bullet: G\times X\rightarrow X$\footnote{In group theory literature, this is often called the left action, but we will omit ``left'' as it is the only type of action we will use in this paper.}, where

\begin{itemize}[leftmargin=*]
    \item {\bf Identity:} $\forall x\in X, \mathbf{1}\bullet x=x$
    \item {\bf Compatibility:} $\forall g,h\in G, x\in X, (g\circ h)\bullet x=g\bullet (h\bullet x)$
\end{itemize}

\end{definition}

As a concrete example, $X$ can be a set of programs and $G$ can be all possible instruction permutations that preserve the input-output behavior of the programs in $X$. It might seem unclear at this point how these permutations form a group (satisfying group axioms). 
We will formalize the notion of permutations and their actions on programs in \S\ref{subsec:finding_program_symmetry}.

\para{Notation.}
It is common in the group theory literature to use $\circ$ to denote both \emph{action} and \emph{composition}, when it is clear from the context which operation is being used~\citep{higgins2018towards}. 
For example, $(g\circ h)\circ x$ denotes \emph{composing} the two transformations $g$ and $h$ and then letting the composite transformation \emph{act} on an object $x$.
It is also customary to interchange $g(x)$ and $g\circ x$ where both denote applying a function/action on $x$.
Therefore, we treat $g\bullet (h\bullet x)$, $g\circ (h\circ x)$, and $g(h(x))$ as the same and follow this convention in the rest of this paper. 


\para{Invariance and equivariance.}
A symmetry group comes with two properties, namely \emph{invariance} and \emph{equivariance}, that formalize the concept of preservation of some properties when a set $X$ is acted upon by the symmetry group $G$. 
Invariance refers to the property that remains unchanged under the action of the symmetry group. Equivariance, on the other hand, expresses the compatibility between the action of the symmetry group and the property. 

To define this more precisely, we need to introduce a function $f: X \rightarrow Y$, where $X$ is the set under consideration and $Y$ is the co-domain representing the range of possible values associated with the property of interest. 
The function $f$ maps each element $x\in X$ to a corresponding element $y$ in the set $Y$, indicating the property's value for that particular element.
We now define the equivariance and invariance of $f$ operating on $X$ against the group operations in $G$.

\begin{definition}
Let $f: X\rightarrow Y$ be a function where $X$ and $Y$ are two sets and $G$ be the symmetry group that acts on both sets $X$ and $Y$.\footnote{We assume that $X$ and $Y$ have the same number of elements for the action of $G$ to be defined on both $X$ and $Y$.}
\begin{itemize}[leftmargin=*]
    \item \textbf{Invariant}: $f$ is called \textbf{$G$-invariant} if $\forall g\in G, \forall x\in X, f(g\circ x)=f(x)$.
    \item \textbf{Equivariant}: $f$ is called \textbf{$G$-equivariant} if $\forall g\in G, \forall x\in X, f(g\circ x)=g\circ f(x)$.
\end{itemize}

\end{definition}

Given the definition of $G$-equivariant function, we have the following lemmas:
\begin{lemma}
\label{lemma:two_g_equivariant}
    Let $f_1$, $f_2$ be two functions that are both $G$-equivariant and $h=f_1\circ f_2$ be the new function composed by $f_1$ and $f_2$. $h$ is also $G$-equivariant.
\end{lemma}


\begin{lemma}
\label{lemma:g_equivariant_and_invariant}

Let $f_1$, $f_2$ be two functions where $f_1$ is $G$-equivariant and $f_2$ is $G$-invariant, and $h=f_2\circ f_1$ be the function composed by $f_1$ and $f_2$. $h$ is $G$-invariant.
\end{lemma}


\para{Self-attention layers.}
Given the embeddings of all vertices $f_i$ from $\mathcal{IG}$, we consider a sequence of embeddings by flattening $\mathcal{IG}$ following the order of instructions in $c$. Let this sequence of embeddings be denoted as $e=(e_1, ..., e_n)$. The self-attention computation, denoted as $A$, takes $e$ as input and produces another sequence of embeddings, denoted as $(e'_1, ..., e'_n)$.

The core operations in self-attention $A$ involve updating each embedding $e_i$ through the following steps:

\begin{enumerate}[leftmargin=*, itemsep=.2cm]

\item
First, it maps each embedding $e_i$ to three embeddings (query, key, and value): $q_i=f_q(e_i)$, $k_i=f_k(e_i)$, $v_i=f_v(e_i)$, where $f_q$, $f_k$, and $f_v$ are affine transformations (\ie fully-connected linear layers) parameterized by $w_q$, $w_k$, and $w_v$, respectively.

\item
Next, it computes the attention score $a_{ij}$ between every pair of embeddings $e_i$ and $e_j$ by taking the dot product between the query $q_i$ of $e_i$ and the key $k_j$ of $e_j$: $a_{ij}=q_i\cdot k_j$. 
The attention scores form a square matrix, where each cell $a_{ij}$ indicates the attention that $e_i$ should pay to $e_j$. The attention scores are then divided by $\sqrt{d}$ (the dimension of the embedding vectors), scaled using the softmax function to ensure they sum up to 1: $\hat{a}_{ij} = \frac{\exp(a{ij})}{\sum^n_{j=1}\exp(a_{ij})}$. These two operations are denoted by $s$.

\item
Finally, the scaled attention score $\hat{a}_{ij}$ is multiplied by $v_j$, and a vector sum is computed: $e'_i = \sum^n_{j=1} \hat{a}_{ij} v_{ij}$.
\end{enumerate}

\section{Complete Proofs}
\label{app:proof}

\para{Lemma~\ref{lemma:two_g_equivariant}.}
Let $f_1$ and $f_2$ be two functions that are both $G$-equivariant and $h=f_1\circ f_2$ be the new function composed by $f_1$ and $f_2$. $h$ is also $G$-equivariant.

\begin{proof}
    For all $g\in G$ and any input $x$, we have 
    \begin{align*}
        h(g\circ x)&=(f_1 \circ f_2)(g \circ x) \\
        &= f_1(f_2(g\circ x)) & \text{\footnotesize $\triangleright$ Associativity} \\
        &= f_1(g\circ f_2(x)) & \text{\footnotesize $\triangleright$ $f_2$ is equivariant to $g$} \\
        &= g\circ f_1(f_2(x)) & \text{\footnotesize $\triangleright$ $f_1$ is equivariant to $g$} \\
        &= g\circ (f_1\circ f_2)(x) & \text{\footnotesize $\triangleright$ Associativity} \\
        &= g\circ h(x)
    \end{align*}

    Therefore, $h(g\circ x)=g\circ h(x)$, so $h$ is $G$-equivariant. 
\end{proof}

\para{Lemma~\ref{lemma:g_equivariant_and_invariant}.}
Let $f_1$ and $f_2$ be two functions where $f_1$ is $G$-equivariant $f_2$ is $G$-invariant, and $h=f_2\circ f_1$ be the new function composed by applying $f_1$ and then $f_2$. $h$ is $G$-invariant.

\begin{proof}

For all $g\in G$ and any input $x$, we have 
    \begin{align*}
        h(g\circ x)&=(f_2 \circ f_1)(g \circ x) \\
        &= f_2(f_1(g\circ x)) & \text{\footnotesize $\triangleright$ Associativity} \\
        &= f_2(g\circ f_1(x)) & \text{\footnotesize $\triangleright$ $f_1$ is equivariant to $g$} \\
        &= f_2(f_1(x)) & \text{\footnotesize $\triangleright$ $f_2$ is invariant to $g$} \\
        &= (f_2\circ f_1)(x) & \text{\footnotesize $\triangleright$ Associativity} \\
        &= h(x)
    \end{align*}
\end{proof}

\para{Theorem~\ref{theorem:automorphism}.}
The set of automorphisms $\sigma\in Aut(\mathcal{IG})$ forms a program symmetry group.

\begin{proof}
    Consider an arbitrary $\sigma\in Aut(\mathcal{IG})$. 
    Definition~\ref{def:ig_automorphism} states that for all $f_i\in\{f_1,...,f_n\}$, $\sigma(f_i)$ have the same edges as $\mathcal{IG}$ before $\sigma$ was applied. As $\sigma$ is a permutation and there is also a bijective mapping between $f_i$ and $c_i$, \ie $f_i$ always interprets $c_i$, we have $\sigma(f_i)=f_i(\sigma\circ c_i, in_i)$.
    Definition~\ref{def:ig_automorphism} also states that $\sigma(f_i)$ is connected with the same edges.
    Therefore, the output of $\sigma(f_i)=out_i$. 
    We thus have $f_i(\sigma\circ c_i, in_i)=out_i=f_i(c_i, in_i), \forall \sigma\in Aut(\mathcal{IG})$ and $\forall f_i\in\{f_1,...,f_n\}$. Therefore, all $\sigma\in Aut(\mathcal{IG})$ are semantics-preserving program symmetries, according to Definition~\ref{def:program_symmetry}.
    Moreover, it is well known in the literature that the automorphisms of any graph form a group by satisfying group axioms (Definition~\ref{def:group})~\citep{biggs1993algebraic, west2001introduction}. 
    Therefore, $Aut(\mathcal{IG})$ forms a group of program symmetries, according to Definition~\ref{def:program_symmetry-group}: $Aut(\mathcal{IG})\in G$.
    
\end{proof}

\para{Permutation matrix.} 
Let $\pi$ be a symmetry in the permutation group that permutes input embeddings $e \in \mathbb{R}^{d\times n}$ to the self-attention layer. 
Applying $\pi$ is done by $e$ with a permutation matrix $p_{\pi} \in \{0,1\}^{n\times n}$~\citep{knuth1970permutations}. 
$p_{\pi}$ is an orthogonal binary matrix with a single 1 in each column and row, and 0s elsewhere. 
Right-multiplying $e$ with $p_{\pi}$ permutes columns, and left-multiplying $e^T$ with $p_{\pi}^T$ permutes rows.

\para{Lemma~\ref{lemma:ig_invariant_token_predictor}.}
The biased self-attention layer computing the embedding $e'_i=GA(e_i)$ is $Aut(\mathcal{IG})$-invariant.

\begin{proof}
\label{proof:GA-invariant}

\begin{align*}
    e'_i&=GA(\sigma\cdot e_i) \\
    &= w_v\sigma(e)\cdot s(w_k\sigma(e)^T\cdot w_q e_i+\sigma(d_i)) \\
    \intertext{$d_i$ is a column vector, so permuting the row of $d_i$ is achieved by $p^T_{\sigma} d_i$ (see \S\ref{subsec:ig_invariant_code_analysis}):}
    &= w_v e p_{\sigma}\cdot s((w_k e p_{\sigma})^T\cdot w_q e_i+p^T_{\sigma} d_i) \\
    &= w_v e p_{\sigma}\cdot s(p_{\sigma}^T(w_k e)^T\cdot w_q e_i+p^T_{\sigma} d_i) \\
    &= w_v e(p_{\sigma}p_{\sigma}^T)\cdot s((w_k e)^T\cdot w_q e_i+ d_i) \\
    \intertext{$p_{\sigma}$ is an orthogonal matrix (see \S\ref{subsec:ig_invariant_code_analysis}):}
    &= w_v e\cdot s((w_k e)^T\cdot w_q e_i+d_i) \\ 
    &= GA(e_i) \\
\end{align*}

\end{proof}

\para{Lemma~\ref{lemma:invariant_distance_matrix}.}
The distance matrix $d$ of PDG remains invariant under the action of $\sigma\in Aut(P\!D\!G)$.

\begin{proof}
We need to show that the longest path $p_{\sigma(i)\sigma(j)}$ from $\sigma(T_{ij})$ to $\sigma(V_i)$ remains the same as $p_{ij}$ (the same applies to $n_{\sigma(i)\sigma(j)}$). Without loss of generality, we focus on proving $p_{\sigma(i)\sigma(j)}=p_{ij}$.

Assume there exists a longest path $P=(T_{ij},...,V_{i})$. Let $P'=(\sigma(T_{ij}),...,\sigma(V_{i}))$ be the corresponding longest path in $\sigma(PDG)$ under the automorphism $\sigma$. We need to demonstrate two properties.

First, $P'$ is a valid path from $\sigma(T_{ij})$ to $\sigma(V_i)$. Since $P$ is a valid path, $T_{ij}$ is adjacent to its next node in $P$ (denoted as $V_m$), and this holds for every pair of neighboring nodes until $V_i$. As $\sigma$ is an automorphism, the same adjacency relationship holds for $P'$, where $\sigma(T_{ij})$ is adjacent to $\sigma(V_m)$ and so on, until $\sigma(V_i)$. Hence, $P'$ is a valid path from $\sigma(T_{ij})$ to $\sigma(V_i)$ in PDG.

Second, we aim to show that $|P|=|P'|$, meaning $p_{\sigma(i)\sigma(j)}=p_{ij}$. Suppose, for contradiction, that $p_{\sigma(i)\sigma(j)}\neq p_{ij}$. Let's consider the case where $p_{\sigma(i)\sigma(j)}>p_{ij}$. This implies that the length of the path $P'=(\sigma(T_{ij}),\sigma(V_m),...,\sigma(V_n),\sigma(V_{i}))$ is longer than $p_{ij}$.

Now, let's apply $\sigma^{-1}$ to each node in $P'$, resulting in $\sigma^{-1}(P')$. Since $\sigma^{-1}$ is also in $Aut(PDG)$ and $\sigma^{-1}(\sigma(V))=V$ (Definition~\ref{def:group}), each pair of adjacent nodes in $P'$, after applying $\sigma^{-1}$, remains adjacent. Furthermore, the path formed by these adjacent nodes has a length of $p_{\sigma(i)\sigma(j)}$, connecting $T_{ij}$ and $V_i$ in the original PDG.

Therefore, we obtain a path in PDG connecting $T_{ij}$ and $V_i$ that is longer than $p_{ij}$, contradicting the fact that $p_{ij}$ is the longest path in PDG between $T_{ij}$ and $V_i$. Thus, we reject the assumption that $p_{\sigma(i)\sigma(j)}<p_{ij}$.

Similarly, we can prove that $p_{\sigma(i)\sigma(j)}>p_{ij}$ is also false by demonstrating its contradiction with the fact that $p_{\sigma(i)\sigma(j)}$ is the longest path in $\sigma(PDG)$.

Hence, we conclude that $p_{\sigma(i)\sigma(j)}=p_{ij}$, and as a result, the positive distance matrix $dp$ remains invariant under the action of $\sigma\in Aut(PDG)$.

By following the same steps, we can prove that $n_{\sigma(i)\sigma(j)}=n_{ij}$, demonstrating the invariance of the negative distance matrix $dn$ under the action of $\sigma\in Aut(PDG)$.

Therefore, the distance matrix $d$ remains invariant.
    
\end{proof}

\para{Lemma~\ref{lemma:comutative_distance_matrix}.}
The distance matrix $d$ of $PDG$ commutes with permutation matrix $p_{\sigma}$ of the automorphism $\sigma\in Aut(PDG)$: $d\cdot p_{\sigma}=p_{\sigma}\cdot d$.

\begin{proof}
According to Lemma~\ref{lemma:invariant_distance_matrix}, we have:

\begin{align*}
    p^T_{\sigma}\cdot d\cdot p_{\sigma}&=d \\
    p_{\sigma}\cdot p^T_{\sigma}\cdot d \cdot p_{\sigma}&=p_{\sigma}\cdot d & \text{\footnotesize $\triangleright$ Apply $p_{\sigma}$ on both side} \\
    d\cdot p_{\sigma}&=p_{\sigma}\cdot d & \text{\footnotesize $\triangleright$ $p_{\sigma}$ is orthogonal} \\
\end{align*}
\end{proof}

\begin{lemma}
Standard self-attention layer $A$ is equivariant to the group of all permutations of input sequences.
\end{lemma}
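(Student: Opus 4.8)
The plan is to mirror the computation used in the proof of Theorem~\ref{theorem:ga_is_equivariant}, specializing $\sigma$ to an arbitrary permutation $\pi$ in the full permutation group $S_n$ and simply dropping the bias term $d_{\mathcal{IG}}$. Recall from the ``Permutation matrix'' paragraph that $\pi$ acts on an embedding sequence $e \in \mathbb{R}^{d\times n}$ by right-multiplication with the orthogonal binary matrix $p_{\pi}$, i.e. $\pi\circ e = e\,p_{\pi}$, with $p_{\pi}p_{\pi}^T = p_{\pi}^T p_{\pi} = I$. Before the main computation I would record the two ingredients needed: (i) transposition turns the right action into a left action, $(w_k e\, p_{\pi})^T = p_{\pi}^T (w_k e)^T$; and (ii) the scaled-softmax operator $s$ is permutation equivariant in the sense that $s\big(p_{\pi}^T M\, p_{\pi}\big) = p_{\pi}^T\, s(M)\, p_{\pi}$ for any square matrix $M$, which holds because the scaling by $\sqrt{d}$ is a scalar and the softmax is applied along a single fixed axis, so that simultaneously permuting the rows and columns of its argument merely relabels its output entries the same way. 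I would also remark at the outset that $\{p_{\pi} : \pi\in S_n\}$ under matrix multiplication is a faithful linear representation of $S_n$ acting on $\mathbb{R}^{d\times n}$, so that ``equivariance to the group of all permutations'' is well posed.

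The core of the proof is then the direct calculation:
\begin{align*}
A(\pi\circ e) &= w_v e\, p_{\pi} \cdot s\big((w_k e\, p_{\pi})^T \cdot w_q e\, p_{\pi}\big) \\
&= w_v e\, p_{\pi} \cdot s\big(p_{\pi}^T (w_k e)^T \cdot w_q e\, p_{\pi}\big) \\
&= w_v e\, p_{\pi} \cdot p_{\pi}^T\, s\big((w_k e)^T \cdot w_q e\big)\, p_{\pi} \\
&= w_v e\, (p_{\pi} p_{\pi}^T)\, s\big((w_k e)^T \cdot w_q e\big)\, p_{\pi} \\
&= w_v e \cdot s\big((w_k e)^T \cdot w_q e\big) \cdot p_{\pi} \\
&= A(e)\, p_{\pi} \;=\; \pi\circ A(e),
\end{align*}
using ingredient (i) in the second line, ingredient (ii) in the third line, orthogonality of $p_{\pi}$ in the fifth line, and the definition of the action in the last. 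Since $\pi$ was an arbitrary element of $S_n$, this yields $A(\pi\circ e) = \pi\circ A(e)$ for all $\pi\in S_n$, which is exactly $S_n$-equivariance of $A$.

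I expect the only genuinely delicate step to be ingredient (ii): one must be explicit about which axis the softmax normalizes over (following the convention fixed in Appendix~\ref{self-attention}, where $\hat a_{ij}$ is obtained by normalizing over the key index $j$) and verify that permuting both the query index and the key index of the attention-score matrix is precisely conjugation by $p_{\pi}$, so that normalization is still carried out over the same — merely relabelled — set of keys. Everything else is bookkeeping with permutation matrices that already appears essentially verbatim in the proof of Theorem~\ref{theorem:ga_is_equivariant}. A clean way to discharge (ii) rigorously is to write out a single entry $\big(s(p_{\pi}^T M p_{\pi})\big)_{ij} = \exp(M_{\pi(i)\pi(j)})/\sum_{k}\exp(M_{\pi(i)k})$ and observe this equals $\big(s(M)\big)_{\pi(i)\pi(j)} = \big(p_{\pi}^T s(M) p_{\pi}\big)_{ij}$, the reindexing $k\mapsto\pi(k)$ in the denominator being harmless since $\pi$ is a bijection; I would include this one-line index computation and otherwise leave the matrix manipulations as above.
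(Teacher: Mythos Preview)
Your proposal is correct and follows essentially the same approach as the paper's own proof: a direct matrix computation using $\pi\circ e = e\,p_{\pi}$, the transpose identity, permutation-equivariance of the scaled softmax, and orthogonality $p_{\pi}p_{\pi}^T = I$. If anything, your write-up is more careful than the paper's, since you explicitly justify the softmax step (ingredient (ii)) with an index-level argument, whereas the paper simply passes $p_{\pi}^T$ and $p_{\pi}$ through $s$ without comment.
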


\begin{proof}

Based on the operations performed by the self-attention layer and the permutation matrix, we can show the equivariance property as follows~\citep{ji2019mathematical}:

\begin{align*}
    A&(\pi\cdot e) \\
    &= w_v\pi(e)\cdot s(w_k\pi(e)^T\cdot w_q\pi(e)) \\
    &= w_v e p_{\pi}\cdot s((w_k e p_{\pi})^T\cdot w_q e p_{\pi}) & \text{\footnotesize $\triangleright$ Apply $p_{\pi}$} \\
    &= w_v e p_{\pi}\cdot s(p_{\pi}^T(w_k e)^T\cdot w_q e p_{\pi}) \\
    &= w_v e(p_{\pi}p_{\pi}^T)\cdot s((w_k e)^T\cdot w_q e)p_{\pi} \\
    &= w_v e\cdot s((w_k e)^T\cdot w_q e)p_{\pi} & \text{\footnotesize $\triangleright$ $p_{\pi}$ is orthogonal} \\ 
    &= \pi(A(e)) \\
  \end{align*}


\end{proof}

Based on Lemma~\ref{lemma:invariant_distance_matrix} and Lemma~\ref{lemma:comutative_distance_matrix}, we now prove Theorem~\ref{theorem:ga_is_equivariant} -- the biased self-attention layer is $Aut(\mathcal{IG})$-equivariant.

\begin{proof}
\begin{align*}
    &GA(\sigma\cdot e) \\
    &= w_v\sigma(e)\cdot s(w_k\sigma(e)^T\cdot w_q\sigma(e)+\sigma(d_{\mathcal{IG}})) \\
    \intertext{$\sigma(\cdot)$ denotes applying the permutation matrix $p_{\sigma}$. As we have $\sigma(d_{\mathcal{IG}})=d_{\mathcal{IG}}$ (the first property of $d_{\mathcal{IG}}$):}
    &= w_v e p_{\sigma}\cdot s((w_k e p_{\sigma})^T\cdot w_q e p_{\sigma}+d_{\mathcal{IG}}) \\
    \intertext{Softmax $s$ is permutation equivariant, and $d_{\mathcal{IG}}\cdot p_{\sigma}=p_{\sigma}\cdot d_{\mathcal{IG}}$ (the second property of $d_{\mathcal{IG}}$):}
    &= w_v e(p_{\sigma}p_{\sigma}^T)\cdot s((w_k e)^T\cdot w_q e\cdot p_{\sigma}+d_{\mathcal{IG}}\cdot p_{\sigma}) \\
    &= w_v e\cdot s((w_k e)^T\cdot w_q e+d_{\mathcal{IG}})\cdot p_{\sigma} \\ 
    &= \sigma(GA(e)) \\
\end{align*}
\end{proof}

\section{\sys Implementation Details}
\label{app:implementation}

\para{Input sequences to self-attention.}
The Transformer self-attention layer takes an input sequence of embeddings $e$ generated by the embedding layer $Emb$. 
It consists of four input sequences: the instruction sequence $c$, per-instruction positional embeddings, and node centrality, denoted as $x_c$, $x_{pos}$, $x_{ind}$, and $x_{outd}$, respectively. 
For example, given the instruction sequence \code{a=a+1;b=a}, $x_c$ represents the tokenized sequence as $(\code{a,=,a,+,1,b,=,a})$. 
$x_{pos}$ assigns positions such that each new instruction/statement begins with position 1 of its first token and increases by 1 for each subsequent token within the instruction.

The centrality of each instruction is encoded by the in-degree and out-degree of the corresponding node in PDG. 
For each token in $c_i$, we annotate it with its in-degree (number of incoming edges) and out-degree (number of outgoing edges). For instance, in the case of \code{a=a+1;b=a}, the in-degree sequence $x_{ind}$ is $(0,0,0,0,0,1,1,1)$, and the out-degree sequence $x_{outd}$ is $(1,1,1,1,1,0,0,0)$.

We embed the four sequences independently using the embedding layers $Emb_c$, $Emb_{pos}$, $Emb_{ind}$, and $Emb_{outd}$. The final input embedding sequences $Emb(x)$ are obtained by summing the embedded sequences for each token: $Emb(x) = Emb_c(x_c) + Emb_{pos}(x_{pos}) + Emb_{ind}(x_{ind}) + Emb_{outd}(x_{outd})$.
We have the following lemma:

\begin{lemma}
\label{lemma:sum_equivariance}
    The sum of the input embedding tokens sequences is $Aut(PDG)$-equivariant: $Emb(\sigma\circ x)=\sigma\circ Emb(x)$.
\end{lemma}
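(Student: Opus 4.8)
The plan is to reduce the claim to Lemma~\ref{lemma:IG-equivariant-embedding} (each individual embedding layer is $Aut(PDG)$-equivariant) plus the elementary fact that a sum of equivariant maps is equivariant, while being careful that the four input sequences $x_c$, $x_{pos}$, $x_{ind}$, $x_{outd}$ are all transformed by the \emph{same} automorphism-induced permutation. First I would make explicit what $\sigma \circ x$ means at the level of sequences: $\sigma \in Aut(PDG)$ induces a node permutation, hence (since each node is an instruction/statement $c_i$ occupying a contiguous block of tokens) a permutation $p_\sigma$ on the token positions of the sequence. I would note that this same $p_\sigma$ acts simultaneously on $x_c$, $x_{pos}$, $x_{ind}$, and $x_{outd}$, because all four are indexed by the same token positions; crucially, $x_{ind}$ and $x_{outd}$ are defined from the in-/out-degrees of the PDG nodes, which are preserved by graph automorphisms, so permuting the nodes simply permutes these degree-annotation sequences (it does not alter their entries). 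Thus $\sigma \circ x = (x_c p_\sigma,\, x_{pos} p_\sigma,\, x_{ind} p_\sigma,\, x_{outd} p_\sigma)$.

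Next I would invoke Lemma~\ref{lemma:IG-equivariant-embedding} (applied to each of $Emb_c$, $Emb_{pos}$, $Emb_{ind}$, $Emb_{outd}$): each embedding layer acts token-wise, so right-multiplication by $p_\sigma$ commutes with it, i.e. $Emb_c(x_c p_\sigma) = Emb_c(x_c)\, p_\sigma$, and similarly for the other three. Then the computation is a one-line chain:
\begin{align*}
Emb(\sigma \circ x) &= Emb_c(x_c p_\sigma) + Emb_{pos}(x_{pos} p_\sigma) + Emb_{ind}(x_{ind} p_\sigma) + Emb_{outd}(x_{outd} p_\sigma) \\
&= Emb_c(x_c)\, p_\sigma + Emb_{pos}(x_{pos})\, p_\sigma + Emb_{ind}(x_{ind})\, p_\sigma + Emb_{outd}(x_{outd})\, p_\sigma \\
&= \big(Emb_c(x_c) + Emb_{pos}(x_{pos}) + Emb_{ind}(x_{ind}) + Emb_{outd}(x_{outd})\big)\, p_\sigma \\
&= \sigma \circ Emb(x),
\end{align*}
where the third equality is just distributivity of matrix multiplication over addition (all four embedded sequences live in $\mathbb{R}^{d \times n}$ and are multiplied on the right by the same $p_\sigma$).

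I expect the main obstacle to be not the algebra but the bookkeeping around $x_{pos}$, $x_{ind}$, and $x_{outd}$: one must argue that these auxiliary sequences are themselves natural objects on the graph, so that the automorphism's action on them is exactly ``permute, don't relabel.'' For $x_{ind}$ and $x_{outd}$ this is immediate from the definition of a graph automorphism (it preserves adjacency, hence degree). For $x_{pos}$ one should observe that positional embeddings are assigned \emph{within} each instruction block (restarting at $1$ for each new instruction), so a node permutation carries the whole block — and its internal position labels — along unchanged; the within-block structure is exactly what makes $p_\sigma$ a well-defined block permutation on token positions. Once this is granted, the rest is the routine equivariance-of-a-sum argument above, and no genuinely new idea is needed beyond Lemma~\ref{lemma:IG-equivariant-embedding}.
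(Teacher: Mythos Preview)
Your proposal is correct and follows essentially the same approach as the paper: invoke Lemma~\ref{lemma:IG-equivariant-embedding} for each of the four embedding layers and then factor the common permutation out of the sum by linearity/distributivity. Your additional care about why the auxiliary sequences $x_{pos}$, $x_{ind}$, $x_{outd}$ are merely permuted (not relabelled) under $\sigma$ is more thorough than the paper's own treatment, which glosses over this point.
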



Group axiom of inclusion specifies that composing the $Aut(PDG)$-equivariant embedding layers with $Aut(PDG)$-equivariant MHA layers results in an $Aut(PDG)$-equivariant representation learning component $r$ in our implementation.

\section{Detailed Experiment Setup}
\label{app:detailed_setup}

\subsection{Implementation Details}
\label{app_subsec:implementation_details}

We implement \sys using Fairseq~\citep{ott2019fairseq} PyTorch~\citep{paszke2019pytorch}. 
We conduct all the experiments on three Linux servers with Ubuntu 20.04 LTS, each featuring an AMD EPYC 7502 processor, 128 virtual cores, and 256GB RAM, with 12 Nvidia RTX 3090 GPUs in total. 

\para{PDG construction.}
To compute PDG for x86 assembly code, we utilize Ghidra to lift the assembly code into P-Code, an intermediate representation used by Ghidra, to track implicit data and control flow via the FLAGS register. 
The key advantage of using Ghidra P-Code is that it keeps the side effects of the instructions, \eg manipulating flag registers, explicit. 
For example, \code{cmp} instruction will set the zero flag implicitly at the assembly level, but Ghidra P-Code will translate it into a series of IR instructions with one of them operating on the ZF explicitly. 
We then analyze the data and control dependencies between each pair of P-Code instructions and flag the corresponding assembly code pair as dependent if at least one dependent P-Code instruction pair exists between those of the assembly code pair.

To compute PDG for Java functions, we employ JavaParser on Java ASTs for each statement to analyze control and data dependencies. 
We iterate through every pair of the statements and connect the dependent statement pairs with the directed edge. 
We extract control-flow dependencies between statements by connecting edges between different basic blocks to prevent permutations among basic blocks.

\para{Datasets.}
We use the Java dataset collected by \citet{allamanis2016convolutional} to evaluate the function name prediction. 
The dataset includes 11 Java projects, such as Hadoop, Gradle, etc., totaling 707K methods and 5.6M statements.
We fix Hadoop as our test set and use the other projects for training, to ensure the two sets do not overlap.

For binary analysis, we collect and compile 27 open-source projects, such as OpenSSL, ImageMagic, CoreUtils, SQLite, etc., which contain approximately 1.13M functions and 137.6M instructions.
We categorize the binaries based on the compilers (\texttt{GCC} or \texttt{Clang}), optimizations (\texttt{O0}-\texttt{O3}), and obfuscations (using a LLVM-based obfuscation passes based on Hikari~\cite{hikari}) and show their statistics in Table~\ref{tab:datasets}.

\begin{table}[!t]
\small
\setlength{\tabcolsep}{10pt}
\centering
\renewcommand{\arraystretch}{1.1}

\caption{The statistics of our binary dataset, categorized by compilers, optimizations, obfuscations, and lengths.}

\begin{tabular}{lrrr}
\toprule
 & \textbf{\# Files} & \textbf{\# Functions}  & \textbf{\# Instructions} \\
\midrule
\multicolumn{4}{l}{\bf Different Compilers\vspace{.1cm}}  \\
\texttt{GCC} & 1,140 & 274,840 & 33,464,420    \\
\texttt{Clang} & 1,136 & 279,832 & 31,949,673    \\ \midrule
\multicolumn{4}{l}{\bf Different Optimization Levels\vspace{.1cm}}   \\
\texttt{O0} & 285 & 98,451 & 10,202,328    \\
\texttt{O1} & 285 & 61,298 & 7,096,903   \\
\texttt{O2} & 285 & 61,298 & 7,096,903   \\
\texttt{O3} & 285 & 57,023 & 9,101,578   \\ \midrule
\multicolumn{4}{l}{\bf Different Compiler Obfuscations\vspace{.1cm}}   \\
\texttt{bcf} & 158 & 61,701 & 9,173,168    \\
\texttt{cff} & 158 & 59,724 & 11,146,990    \\
\texttt{ind} & 158 & 56,291 & 2,501,422   \\
\texttt{spl} & 158 & 61,379 & 9,652,268   \\
\texttt{sub} & 158 & 59,694 & 6,198,900   \\ \midrule
\end{tabular}

\label{tab:datasets}
\end{table}

\subsection{Experiment Configurations}
\label{subsec:experiment_configs}

\para{Program analysis tasks for evaluation.}
For source code analysis tasks, we focus on the \emph{method name prediction} and \emph{defect prediction}.
Method name prediction aims to predict the function name (in natural language) given the body of the method.
This task has been extensively evaluated by prior works to test the generalizability of code models~\cite{rabin2021generalizability}.
Following the strategies adopted in SymLM~\cite{jin2022symlm}, we tokenize the function names and formulate the function name prediction as a multi-label classification problem, \ie multiple binary classifications that predict the presence of a specific token in the vocabulary.
We then match the predicted tokens with the tokenized ground truth tokens to compute the F1 score.
We thus employ a 2-layer fully-connected network $F:\mathbb{R}^d\rightarrow \{0,1\}^L$ on top of a mean-pooled embedding from self-attention layers to ensure $Aut(PDG)$-invariance (\S\ref{subsec:ig_invariant_predictive_learning}), where $L$ is the vocabulary of all function name tokens in our dataset.

The defect prediction task is much more simplified than method name prediction. 
It is a binary classification task to predict whether a given Java method is buggy or not. 
We obtain the dataset from Defects4J~\cite{just2014defects4j}.

We consider three binary analysis tasks commonly used to evaluate ML-based approaches to security applications~\cite{li2021palmtree}.
The first task is \emph{function similarity detection}.
It aims to detect semantically similar functions, \eg those compiled by different compiler transformations (see below).
This task is often used to detect vulnerabilities, \ie by searching similar vulnerable functions in firmware, or malware analysis, \ie by searching similar malicious functions to identify the malware family~\cite{marcelli2022machine, xu2017neural}.
We leverage the pooling-based predictor (\S\ref{subsec:ig_invariant_predictive_learning}) by taking the mean of the embeddings $e$ produced by the last self-attention layer and feed that to a 2-layer fully-connected neural network $F:\mathbb{R}^d\rightarrow\mathbb{R}^d$.
We then leverage the cosine distance between the output of $F$ for a pair of function embeddings, \ie $e^1,e^2$, to compute their similarity: $cos(F(\mu(e^1)),F(\mu(e^2)))$.

The second task is \emph{function signature prediction}~\cite{chua2017neural}.
It aims to predict the number of arguments and their source-level types given the function in stripped binaries.
Similar to function similarity detection, we stack a 2-layer fully-connected network $F:\mathbb{R}^d\rightarrow L$ on top of mean-pooled embeddings from self-attention layers, which outputs the function signature label, \eg $L=\{\texttt{int},\texttt{float},...\}$.

The third task is \emph{memory region prediction}~\cite{guo2019vsa}, which aims to predict the type of memory region, \ie stack, heap, global, etc., that each memory-accessing instruction can access in a stripped binary.
As the prediction happens for each instruction, we employ the token-level predictor (\S\ref{subsec:ig_invariant_predictive_learning}) $F:\mathbb{R}^d\rightarrow L$, where $L=\{$\texttt{stack}, \texttt{heap}, \texttt{global}, \texttt{other}$\}$.

\para{Baselines.}
We consider nine baselines for function name prediction, including the dedicated models trained to predict function names~\citep{alon2019code2vec, alon2018code2seq, fernandes2018structured} and code LLMs~\cite{luo2023wizardcoder, guo2020graphcodebert, lachaux2021dobf, wang2021codet5, roziere2023code}.
For defect prediction, we excluded the models specialized for function name prediction, while including two additional code models that have been evaluated in the defect prediction task~\cite{guo2022unixcoder, feng2020codebert}.
Note that the dataset used to train the baseline LLMs might overlap with our test set.
For example, Hadoop (our test set for function name prediction, see Appendix~\ref{app:detailed_setup}) is included in BigCode~\citep{bigcode-project}, one of the widely used datasets to train code LLMs.
Our goal is to demonstrate \sys still generalizes better than existing code LLMs under such a disadvantaged setting.

For tasks (3)-(5), we compare to PalmTree~\citep{li2021palmtree}, the only binary code model that has evaluated on all our considered tasks. 
To ensure a fair comparison, we include three PalmTree versions: \pto, \pts, and \ptu. \pto is pre-trained on \emph{2.25 billion} instructions. \pts is pre-trained on \emph{137.6 million} instructions using our own dataset (Appendix~\ref{app:detailed_setup}), with full access to fine-tuning and evaluation data (excluding labels), while not accessible by \sys as it is not pre-trained. 
We aim to show \sys's strong generalizability even in this disadvantaged setting. 
\ptu serves as the baseline Transformer encoder without being pre-trained.

\para{Transformations.}
We consider a set of semantics-preserving transformations beyond PDG automorphisms to evaluate how preserving $Aut(PDG)$-equivariant improves \sys's generalizability.
Notably, some of these program transformations (described below) have enabled instruction reordering, which inherently performs instruction permutations.

We consider two categories of binary code transformations: (1) \emph{compiler optimizations}, where we examine 4 optimization levels (\texttt{O0-O3}) from GCC-7.5 and Clang-8, some of which involve instruction permutations, like reordering for scheduling purposes (\texttt{-fdelayed-branch}, \texttt{-fschedule-insns}); and (2) \emph{compiler-based obfuscations}, where we follow SymLM~\cite{jin2022symlm} by using 5 obfuscations written in LLVM, \ie control flow flattening (\texttt{cff}), instruction substitution (\texttt{sub}), indirect branching (\texttt{ind}), basic block split (\texttt{spl}), and bogus control flow (\texttt{bcf}), which inherently include reordering instructions, \eg adding a trampoline.

\para{Hyperparameters.}
We use \sys with 8 attention layers, 12 attention heads, and a maximum input length of 512. 
For training, we use 10 epochs, a batch size of 64, and 14K/6K training/testing samples (strictly non-overlapping) unless stated otherwise. 
We employ 16-bit weight parameters for \sys to optimize for memory efficiency.

\para{Evaluation metrics.}
For most analysis tasks (\S\ref{sec:experimental_setup}), we use \emph{F1 score}, the harmonic mean of precision and recall. 
We follow the existing works~\cite{jin2022symlm} and adopt their definition of F1 beyond the binary classifier. 
Take function name prediction as an example, we first tokenize both the ground truth and the predicted function names into a set of tokens, \ie $W$ and $W'$, respectively. 
In this case, precision measures out of $W'$, how many tokens in $W'$ appear in $W$: $precision=\frac{W\cap W'}{|W'|}$, and recall measures out of all the tokens in $W$, how many of them are correctly predicted in $W'$: $recall=\frac{W\cap W'}{|W|}$. 
We measure the precision and recall and compute the F1 score for each sample accordingly. We then average them across all samples.

For function similarity detection, as the cosine distance between two function embeddings can be an arbitrary real value between -1 and 1, a threshold is needed to determine whether pairs are similar or not. 
Therefore, we employ the ROC curve by varying the thresholds and measuring the corresponding True Positive Rate (TPR): 
$\text{TPR} = \frac{\text{True Positives}}{\text{True Positives} + \text{False Negatives}}$ and False Positive Rate (FPR): $\text{FPR} = \frac{\text{False Positives}}{\text{False Positives} + \text{True Negatives}}$. 
The ROC curve is then plotted with FPR at the x-axis and TPR at the y-axis. 
Following~\citet{li2021palmtree}, we leverage the Area Under Curve (AUC) score of the ROC curve to quantify the performance for ease of comparison. 

We note that AUC-ROC might not be the most reliable metric~\citep{arp2022and}, but we choose it primarily for comparing to the baselines whose results are measured in AUC-ROC~\citep{li2021palmtree}.

\section{Additional Experiments}
\label{app:additional_eval}

\begin{table*}[!t]

\setlength{\tabcolsep}{3pt}
\renewcommand{\arraystretch}{1.1}

\caption{Complete evaluation statistics on samples under different percentages of semantics-preserving permutations. F1 measures the prediction performance of function name, function signature, and memory region. AUC (area under the ROC curve) measures the function similarity detection performance. The violation rate is highlighted in \colorbox{lightred-medium}{red}. The larger the violation rate, the darker the color. }

\label{tab:app-unseen-permute}

\begin{center}
\begin{tabular}{llll|lllll|llll}
\toprule[1.1pt]
 & & \multirow{2}{*}{\begin{tabular}[c]{@{}l@{}}\bf Model\\\bf Size\end{tabular}} & \multirow{2}{*}{\begin{tabular}[c|]{@{}l@{}}\bf Train\\\bf Size\end{tabular}} & \multicolumn{5}{c|}{\bf F1 \& AUC} & \multicolumn{4}{c}{\bf Invariance Violation (\%)} \\ 
 & & & & Before & $w=1$ & $w=2$ & $w=3$ & $w=4$ & $w=1$ & $w=2$ & $w=3$ & $w=4$  \\ \midrule[.9pt]
\multirow{10}{*}{\begin{tabular}[c]{@{}l@{}}Function\\ Name\end{tabular}} & \sys & 68.4M & 202M & \cellcolor{shadecolor}0.363 &  0.364$^*$  & 0.363  & 0.363 & 0.363 & 0 & 0.1$^*$ & 0 & 0 \\
 & code2seq &  6.3M   & 5.1G & \cellcolor{shadecolor}0.255 & 0.238 & 0.236 & 0.237 & 0.247  & \cellcolor{lightred-darkest}54 & \cellcolor{lightred-darkest}53 & \cellcolor{lightred-darkest}57 & \cellcolor{lightred-darkest}61 \\
 & code2vec &   348M  & 32G & \cellcolor{shadecolor}0.177 & 0.199 & 0.195 & 0.197 & 0.196 & \cellcolor{lightred-darkest}53 & \cellcolor{lightred-darkest}53 & \cellcolor{lightred-darkest}52 & \cellcolor{lightred-darkest}52 \\
 & CodeLlama & 7B & N/A & \cellcolor{shadecolor}0.317 & 0.317 & 0.314 & 0.303 & 0.314  & \cellcolor{lightred-lightest}19 & \cellcolor{lightred-lightest}18 & \cellcolor{lightred-medium}19 & \cellcolor{lightred-medium}18 \\
 & CodeT5    & 770M & N/A & \cellcolor{shadecolor}0.254 & 0.254 & 0.254 & 0.254 & 0.254  & \cellcolor{lightred-lightest}9 & \cellcolor{lightred-lightest}13 & \cellcolor{lightred-lightest}15 & \cellcolor{lightred-lightest}16\\
 & DOBF    & 428M & N/A & \cellcolor{shadecolor}0.163 & 0.182 & 0.182 & 0.175 & 0.201  & \cellcolor{lightred-darkest}22 & \cellcolor{lightred-darkest}28 & \cellcolor{lightred-darkest}36 & \cellcolor{lightred-darkest}41\\
 & GGNN     &  53M   & 2.4G & \cellcolor{shadecolor}0.016 & 0.016 & 0.016 & 0.016 & 0.016 & \cellcolor{lightred-lightest}4 & \cellcolor{lightred-lightest}4 & \cellcolor{lightred-lightest}5 & \cellcolor{lightred-lightest}7 \\
 & GPT-4    & N/A & N/A & \cellcolor{shadecolor}0.303 & 0.313 & 0.317 & 0.329 & 0.307  & \cellcolor{lightred-darkest}42 & \cellcolor{lightred-darkest}43 & \cellcolor{lightred-darkest}45 & \cellcolor{lightred-darkest}43\\
 & GraphCodeBERT    & 481M & N/A & \cellcolor{shadecolor}0.208 & 0.205 & 0.212 & 0.202 & 0.206  & \cellcolor{lightred-medium}13 & \cellcolor{lightred-darkest}22 & \cellcolor{lightred-darkest}28 & \cellcolor{lightred-darkest}31\\
 & WizardCoder & 3B & N/A & \cellcolor{shadecolor}0.339 & 0.347 & 0.348 & 0.359 & 0.346  & \cellcolor{lightred-lightest}6 & \cellcolor{lightred-lightest}7 & \cellcolor{lightred-medium}12 & \cellcolor{lightred-medium}14 \\
 \midrule[.9pt]
\multirow{6}{*}{\begin{tabular}[c]{@{}l@{}}Defect\\ Prediction\end{tabular}} & \sys & 67.7M & 720K & \cellcolor{shadecolor}0.688 & - & - & - & 0.688 & - & - & - & 0 \\
 & CodeBERT & 476M & N/A & \cellcolor{shadecolor}0.622 & - & - & - & 0.617 & \cellcolor{lightred-medium}- & \cellcolor{lightred-medium}- & \cellcolor{lightred-medium}- & \cellcolor{lightred-medium}4.1 \\
 & CodeT5 & 770M & N/A & \cellcolor{shadecolor}0.633 & - & - & - & 0.6  & \cellcolor{lightred-darkest}- & \cellcolor{lightred-darkest}- & \cellcolor{lightred-darkest}- & \cellcolor{lightred-darkest}6 \\
 & DOBF & 428M & N/A & \cellcolor{shadecolor}0.624 & - & - & - & 0.615  & \cellcolor{lightred-darkest}- & \cellcolor{lightred-darkest}- & \cellcolor{lightred-darkest}- & \cellcolor{lightred-darkest}2.7 \\
 & GraphCodeBERT & 481M & N/A & \cellcolor{shadecolor}0.617 & - & - & - & 0.617  & \cellcolor{lightred-lightest}- & \cellcolor{lightred-lightest}- & \cellcolor{lightred-lightest}- & \cellcolor{lightred-lightest}1.3 \\
 & UnixCoder & 504M & N/A & \cellcolor{shadecolor}0.671 & - & - & - & 0.671  & \cellcolor{lightred-medium}- & \cellcolor{lightred-medium}- & \cellcolor{lightred-medium}- & \cellcolor{lightred-medium}2.9 \\ \midrule[.9pt]
\multirow{4}{*}{\begin{tabular}[c]{@{}l@{}}Function\\ Signature\end{tabular}} & \sys & 58.3M & 12M & \cellcolor{shadecolor}0.88 & 0.88 & 0.88 & 0.88 & 0.88 & 0 & 0 & 0 & 0 \\
 & \pto & 3.2M & 17.4G & \cellcolor{shadecolor}0.59 & 0.55 & 0.49 & 0.42 & 0.41  & \cellcolor{lightred-medium}12 & \cellcolor{lightred-medium}23 & \cellcolor{lightred-medium}18 & \cellcolor{lightred-medium}24 \\
 & \pts & 3.2M & 5.3G & \cellcolor{shadecolor}0.49 & 0.48 & 0.45 & 0.41 & 0.41  & \cellcolor{lightred-medium}19 & \cellcolor{lightred-lightest}6 & \cellcolor{lightred-medium}12 & \cellcolor{lightred-lightest}6 \\
 & \ptu & 3.2M & 614M & \cellcolor{shadecolor}0.19 & 0.41 & 0.41 & 0.41 & 0.41  & \cellcolor{lightred-darkest}83 & \cellcolor{lightred-darkest}82 & \cellcolor{lightred-darkest}83 & \cellcolor{lightred-darkest}86 \\ \midrule[.9pt]
\multirow{4}{*}{\begin{tabular}[c]{@{}l@{}}Memory\\ Region\end{tabular}} & \sys & 58.9M & 340M & \cellcolor{shadecolor}0.86 & 0.86 & 0.86 & 0.86 & 0.86 & 0 & 0 & 0 & 0 \\
 & \pto & 3.07M & 17.9G & \cellcolor{shadecolor}0.57 & 0.45 & 0.45 & 0.48 & 0.43  & \cellcolor{lightred-medium}17 & \cellcolor{lightred-medium}17 & \cellcolor{lightred-medium}28 & \cellcolor{lightred-medium}18 \\
 & \pts & 3.07M & 5.8G & \cellcolor{shadecolor}0.57 & 0.42 & 0.45 & 0.47 & 0.44  & \cellcolor{lightred-medium}10 & \cellcolor{lightred-medium}13 & \cellcolor{lightred-medium}14 & \cellcolor{lightred-medium}11 \\
 & \ptu & 3.07M & 1.1G & \cellcolor{shadecolor}0.32 & 0.22 & 0.29 & 0.17 & 0.2   & \cellcolor{lightred-medium}30 & \cellcolor{lightred-medium}36 & \cellcolor{lightred-medium}31 & \cellcolor{lightred-medium}32 \\ \midrule[.9pt]
\multirow{4}{*}{\begin{tabular}[c]{@{}l@{}}Function\\ Similarity\end{tabular}} & \sys & 58.9M & 133M & \cellcolor{shadecolor}0.96 & 0.96 & 0.96 & 0.96 & 0.96 & 0 & 0 & 0 & 0 \\
 & \pto & 3.06M & 17.4G & \cellcolor{shadecolor}0.72 & 0.61 & 0.53 & 0.71 & 0.69  & \cellcolor{lightred-medium}18 & \cellcolor{lightred-medium}19 & \cellcolor{lightred-medium}30 & \cellcolor{lightred-medium}31 \\
 & \pts & 3.06M & 5.3G & \cellcolor{shadecolor}0.8 & 0.79 & 0.76 & 0.72 & 0.72  & \cellcolor{lightred-medium}30 & \cellcolor{lightred-medium}28 & \cellcolor{lightred-medium}30 & \cellcolor{lightred-medium}35 \\
 & \ptu & 3.06M & 614M & \cellcolor{shadecolor}0.71 & 0.64 & 0.56 & 0.66 & 0.72   & \cellcolor{lightred-medium}11 & \cellcolor{lightred-medium}18 & \cellcolor{lightred-medium}24 & \cellcolor{lightred-medium}38 \\ \bottomrule[1.1pt]
\multicolumn{13}{l}{\begin{tabular}[c]{@{}l@{}}\scriptsize $^*$We observe a slight value change due to the floating point precision error by adopting memory-efficient 16-bit.\end{tabular}}
 
\end{tabular}
\end{center}
\end{table*}

\subsection{Generalization and Robustness}

Table~\ref{tab:app-unseen-permute} shows the complete results of \sys and baselines against semantics-preserving code transformations across different analysis tasks.
$w$ measures the number of steps of applying (non-repeated) 2-statement permutations by permuting different statements. 
The larger the $w$, the more permutations are applied.
While we do not observe a clear trend that a higher violation rate or decreased performance correlates with the number of times applying permutation, \sys consistently shows consistent performance and 0 violation rate.
On the contrary, all the baselines are not robust against the permutations and have their labels changed, \ie by up to 86\% violation rate.

\begin{table*}[!t]

\centering
\setlength{\tabcolsep}{6pt}
\renewcommand{\arraystretch}{1}

\caption{The performance (F1) of \sys and baselines against different unseen code transformations.}

\label{tab:app-unseen-source}

\begin{tabular}{lrrrrrrrr}
\toprule[1.1pt]
         \textbf{Transform}   & \textbf{Applied} & \textbf{\sys} & \textbf{code2seq} & \textbf{code2vec} & \textbf{CodeLlama} & \textbf{GGNN} & \textbf{GPT-4} & \textbf{WizardCoder} \\ \midrule[.9pt] 
\multirow{2}{*}{\begin{tabular}[c]{@{}l@{}}Variable\\ Rename\end{tabular}}        &Before & \cellcolor{shadecolor}\textbf{0.389}  & \cellcolor{shadecolor}0.334  & \cellcolor{shadecolor}0.264  & \cellcolor{shadecolor}0.461 & \cellcolor{shadecolor}0.029  & \cellcolor{shadecolor}0.356 & \cellcolor{shadecolor} 0.362\\
& After & \textbf{0.375}  & 0.335  & 0.247  & 0.426 &  0.026   & 0.351   & 0.361  \\ \midrule[.9pt]
\multirow{2}{*}{\begin{tabular}[c]{@{}l@{}}Statement\\ Permute\end{tabular}}        &Before & \cellcolor{shadecolor}\textbf{0.363}  & \cellcolor{shadecolor}0.241  & \cellcolor{shadecolor}0.177  & \cellcolor{shadecolor}0.317 & \cellcolor{shadecolor}0.019  & \cellcolor{shadecolor} 0.303 & \cellcolor{shadecolor} 0.339 \\
& After & \textbf{0.363} & 0.234  & 0.196  & 0.314 &  0.019   &  0.307  &  0.346 \\ \midrule[.9pt]
\multirow{2}{*}{\begin{tabular}[c]{@{}l@{}}Loop\\ Exchange\end{tabular}}        &Before & \cellcolor{shadecolor}\textbf{0.373} & \cellcolor{shadecolor}0.283  & \cellcolor{shadecolor}0.243  & \cellcolor{shadecolor} 0.414& \cellcolor{shadecolor}0.007  & \cellcolor{shadecolor} 0.310 & \cellcolor{shadecolor} 0.379 \\
& After & \textbf{0.357}  & 0.299  & 0.241  & 0.399 & 0.007   &  0.308  & 0.366  \\ \midrule[.9pt]
\multirow{2}{*}{\begin{tabular}[c]{@{}l@{}}Boolean\\ Exchange\end{tabular}}        &Before & \cellcolor{shadecolor}\textbf{0.421}  & \cellcolor{shadecolor}0.332  & \cellcolor{shadecolor}0.268  & \cellcolor{shadecolor} 0.360 & \cellcolor{shadecolor}0.031  & \cellcolor{shadecolor} 0.329 & \cellcolor{shadecolor} 0.414\\
& After & \textbf{0.412}  & 0.272  & 0.242  & 0.447 & 0.026   & 0.323  & 0.406  \\ \midrule[.9pt]
\multirow{2}{*}{\begin{tabular}[c]{@{}l@{}}Unused\\ Statement\end{tabular}}        &Before & \cellcolor{shadecolor}\textbf{0.347}  & \cellcolor{shadecolor}0.296  & \cellcolor{shadecolor}0.267  & \cellcolor{shadecolor}0.429 & \cellcolor{shadecolor}0.016  & \cellcolor{shadecolor} 0.316 & \cellcolor{shadecolor} 0.358\\
& After & \textbf{0.342}  & 0.285  & 0.26  & 0.428 &  0.012   & 0.309   & 0.350 \\ \midrule[.9pt]
\multirow{2}{*}{\begin{tabular}[c]{@{}l@{}}Switch\\ to If\end{tabular}}        &Before & \cellcolor{shadecolor}\textbf{0.372}  & \cellcolor{shadecolor}0.31  & \cellcolor{shadecolor}0.376  & \cellcolor{shadecolor}0.430 & \cellcolor{shadecolor}0.027  & \cellcolor{shadecolor} 0.326 & \cellcolor{shadecolor} 0.385 \\
& After & \textbf{0.372}  & 0.293  & 0.33  & 0.429 & 0.009   & 0.332   & 0.379 \\ \bottomrule[1.1pt]

\end{tabular}
\end{table*}

Table~\ref{tab:app-unseen-source} shows the complete results when evaluating \sys and other baselines on new samples transformed by the semantics-preserving transformations that have never been presented in the training.
We integrate CodeWordNet~\citep{jin2022symlm} to relax predicted names to a cluster of synonyms, addressing the issue of ambiguity of function names.
However, the performance of \sys decreases to 0.309 (was 0.374) when we measure the exact match.
As discussed in \S\ref{subsec:rq1-rq2}, we observe that \sys outperforms the strong baselines, \eg code2seq, by 30.8\%. 

\para{Unseen optimizations.}
We vary the compiler optimizations in training and evaluation and include reference experiments where the training and evaluation share the same optimization options (marked in gray). 
For function similarity detection, training on \texttt{O0}-\texttt{O1} means the function pair has one function compiled with \texttt{O0} and the other with \texttt{O1}.
In the case of evaluating on unseen optimizations, the corresponding testing set has to come from those compiled with \texttt{O2}-\texttt{O3} to ensure the optimizations are unseen.

Figure~\ref{fig:unseen-opt} shows that \sys outperforms \pto by 31\% when evaluated on unseen optimizations. 
\sys experiences a performance drop (\eg by 28.6\%) when not trained on \texttt{O0} but tested on those compiled with \texttt{O0}. 
We believe this drop is caused by the extensive optimizations already enabled at the \texttt{O1} (\eg GCC employs 47 optimizations to aggressively reduce execution time and code size). 
The shift in distribution between \texttt{O1} and \texttt{O0} is much more pronounced than between \texttt{O2} and \texttt{O1}, indicated by a KL divergence of 1.56 from \texttt{O1} to \texttt{O0} compared to 0.06 (96.2\% lower) from \texttt{O3} to \texttt{O2}.
Nevertheless, when evaluated on seen optimizations, \sys outperforms \pto by 28.1\% on average.

\begin{figure*}[!t]
\centering

\includegraphics[width=.6\linewidth]{./figs/rq1/cross-opt/cross-opt-legend.pdf}

\subfloat[Train \texttt{O0}-\texttt{O1}]{
\includegraphics[width=0.18\linewidth]{./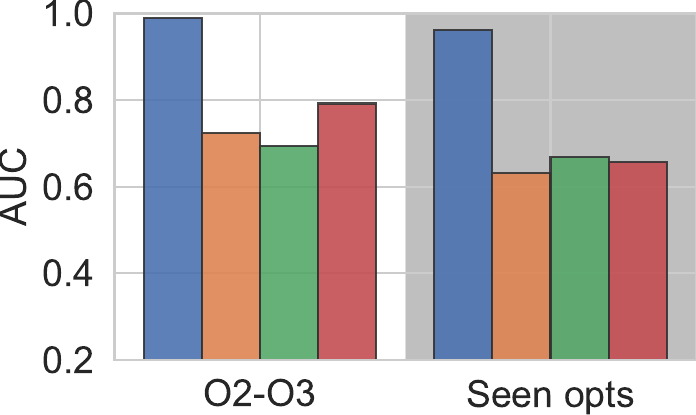}
\label{subfig:similarity-O0-O1}}\hspace{-.2em}%
\subfloat[Train \texttt{O0}-\texttt{O2}]{
\includegraphics[width=0.15\linewidth]{./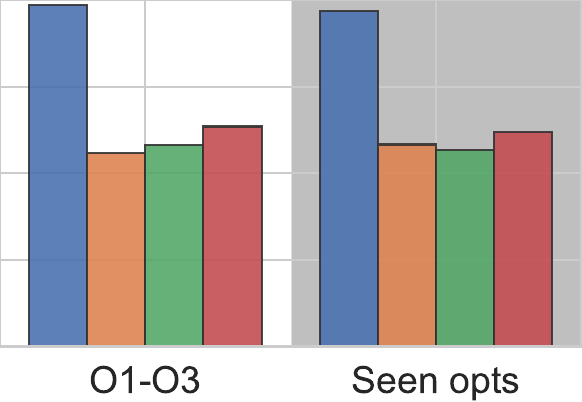}
\label{subfig:similarity-O0-O2}}\hspace{-.2em}%
\subfloat[Train \texttt{O0}-\texttt{O3}]{
\includegraphics[width=0.15\linewidth]{./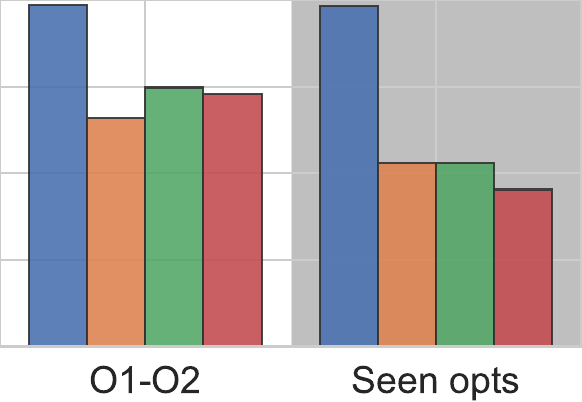}
\label{subfig:similarity-O0-O3}}\hspace{-.2em}%
\subfloat[Train \texttt{O1}-\texttt{O2}]{
\includegraphics[width=0.15\linewidth]{./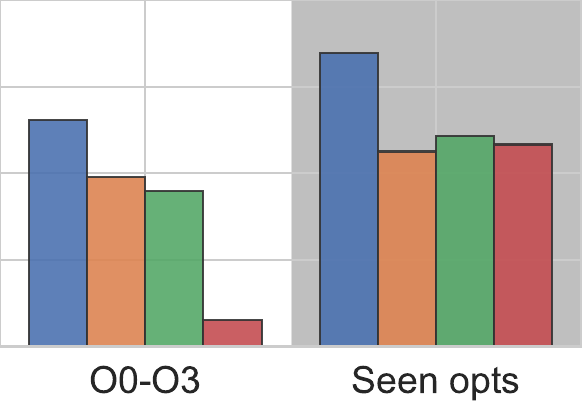}
\label{subfig:similarity-O1-O2}}\hspace{-.2em}%
\subfloat[Train \texttt{O1}-\texttt{O3}]{
\includegraphics[width=0.15\linewidth]{./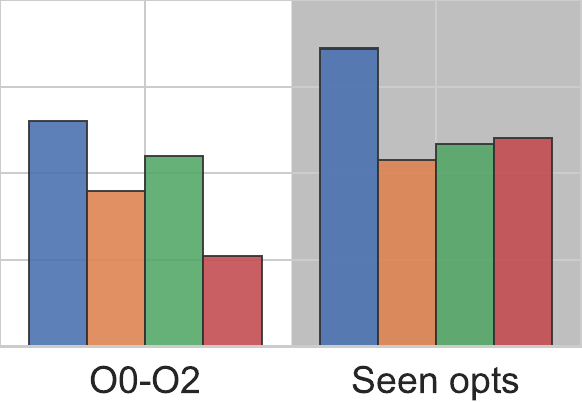}
\label{subfig:similarity-O1-O3}}\hspace{-.2em}%
\subfloat[Train \texttt{O2}-\texttt{O3}]{
\includegraphics[width=0.15\linewidth]{./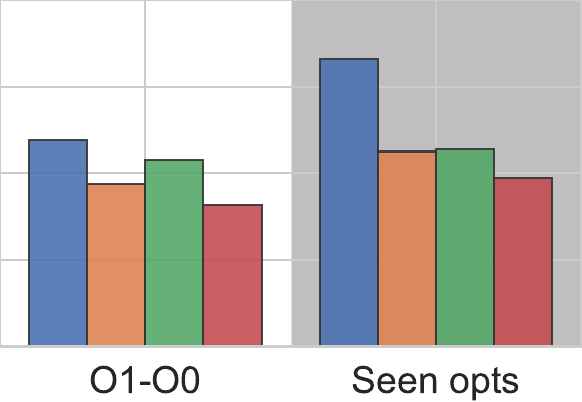}
\label{subfig:similarity-O2-O3}}

\subfloat[Train \texttt{O0}]{
\includegraphics[width=0.23\linewidth]{./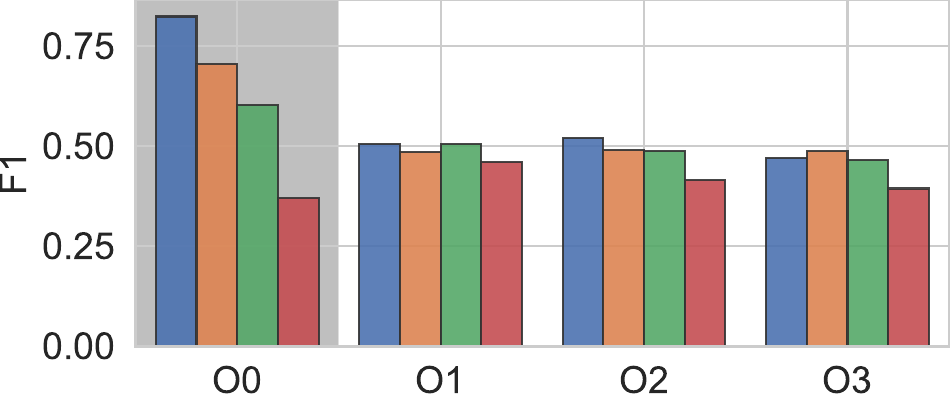}
\label{subfig:signature-O0}}
\subfloat[Train \texttt{O1}]{
\includegraphics[width=0.23\linewidth]{./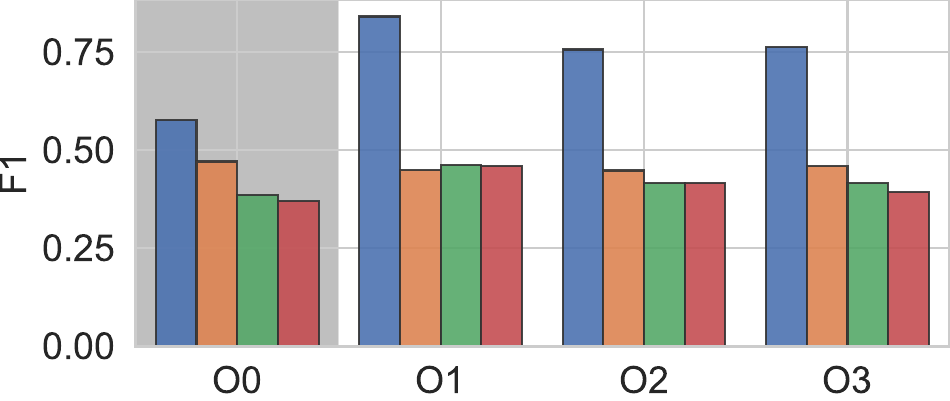}
\label{subfig:signature-O1}}
\subfloat[Train \texttt{O2}]{
\includegraphics[width=0.23\linewidth]{./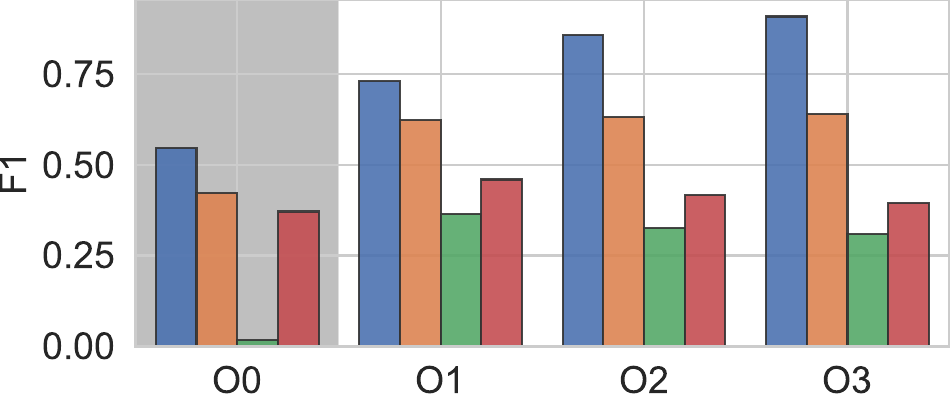}
\label{subfig:signature-O2}}
\subfloat[Train \texttt{O3}]{
\includegraphics[width=0.23\linewidth]{./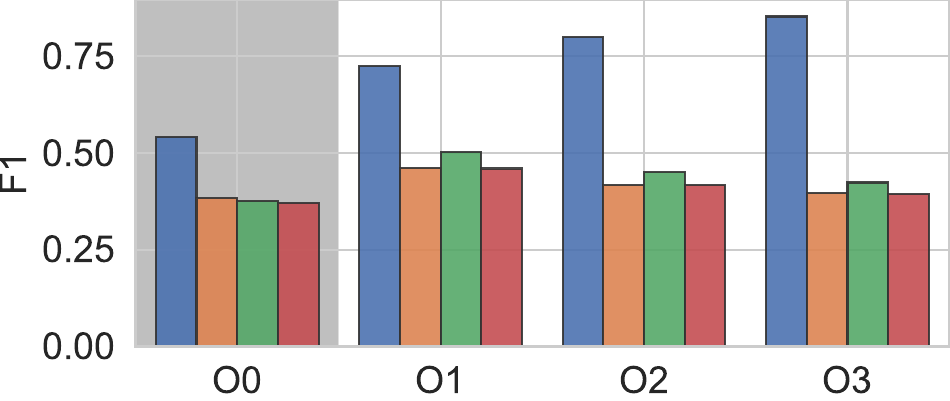}
\label{subfig:signature-O3}}

\caption{Unseen optimization evaluation. The upper row, \ie (a)-(f), shows the results on function similarity detection. The lower row, \ie (g)-(j), are results on function signature prediction. We also include the evaluation on seen optimizations (marked in gray).}
\label{fig:unseen-opt}
\end{figure*}


\para{Unseen obfuscations.} We compare \sys to baselines on generalization to unseen obfuscations. 
Figure~\ref{fig:unseen-obf} shows that \sys outperforms \pto (on average) on unseen and seen obfuscations by 33.3\% and 36.6\%, respectively. 
Similar to the observations in evaluating unseen optimizations, while the obfuscations are not directly related to instruction permutations (\ie automorphisms in $Aut(P\!D\!G)$), \sys maintains its superior performance.

\begin{figure*}[!t]
\centering

\includegraphics[width=.6\linewidth]{./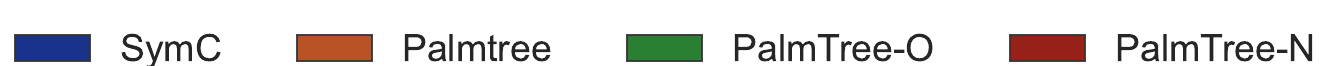}

\subfloat[Train \texttt{sub}]{
\includegraphics[width=0.22\linewidth]{./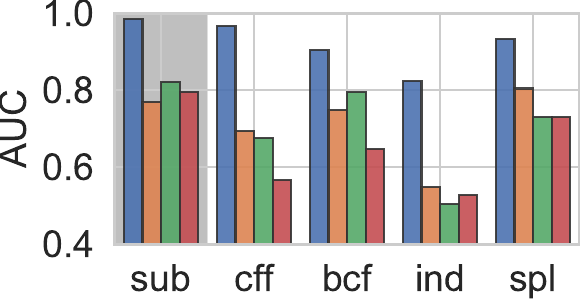}
\label{subfig:similarity-sub}}\hspace{-.2em}%
\subfloat[Train \texttt{cff}]{
\includegraphics[width=0.18\linewidth]{./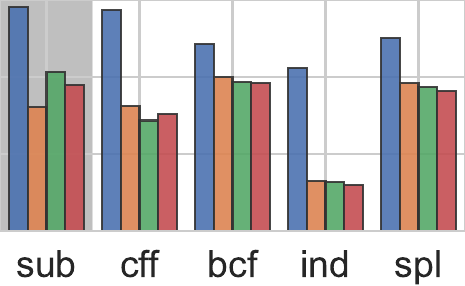}
\label{subfig:similarity-cff}}\hspace{-.2em}%
\subfloat[Train \texttt{bcf}]{
\includegraphics[width=0.18\linewidth]{./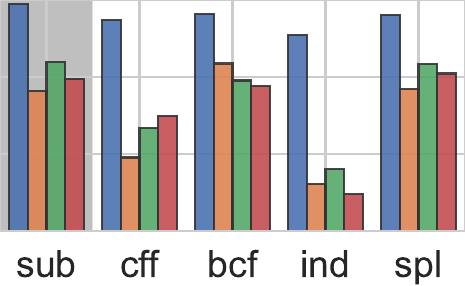}
\label{subfig:similarity-bcf}}\hspace{-.2em}%
\subfloat[Train \texttt{ind}]{
\includegraphics[width=0.18\linewidth]{./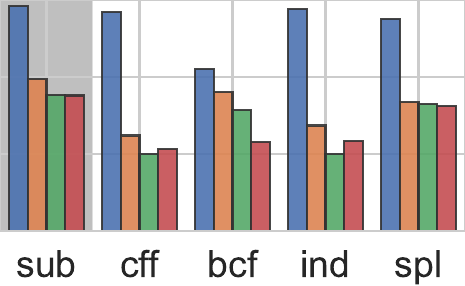}
\label{subfig:similarity-ind}}\hspace{-.2em}%
\subfloat[Train \texttt{spl}]{
\includegraphics[width=0.18\linewidth]{./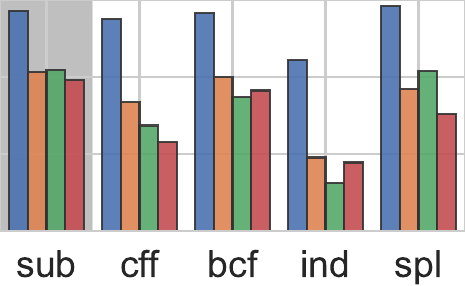}
\label{subfig:similarity-spl}}

\subfloat[Train \texttt{sub}]{
\includegraphics[width=0.22\linewidth]{./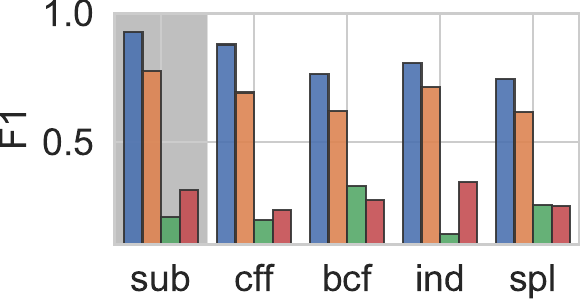}
\label{subfig:signature-sub}}\hspace{-.2em}%
\subfloat[Train \texttt{cff}]{
\includegraphics[width=0.18\linewidth]{./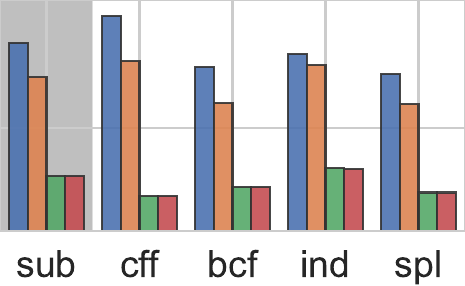}
\label{subfig:signature-cff}}\hspace{-.2em}%
\subfloat[Train \texttt{bcf}]{
\includegraphics[width=0.18\linewidth]{./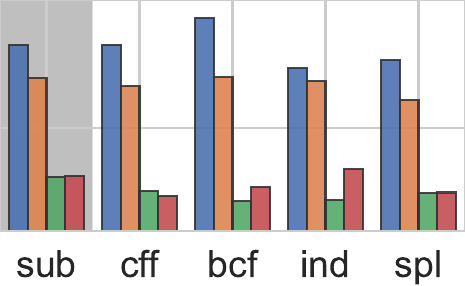}
\label{subfig:signature-bcf}}\hspace{-.2em}%
\subfloat[Train \texttt{ind}]{
\includegraphics[width=0.18\linewidth]{./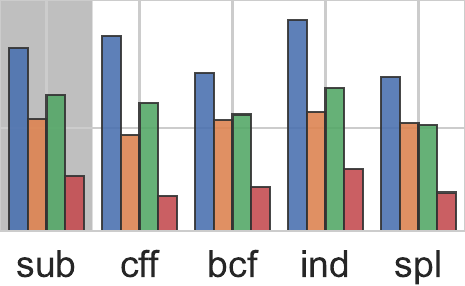}
\label{subfig:signature-ind}}\hspace{-.2em}%
\subfloat[Train \texttt{spl}]{
\includegraphics[width=0.18\linewidth]{./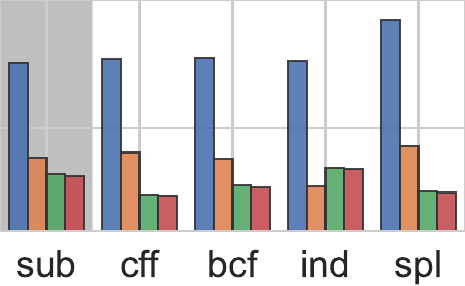}
\label{subfig:signature-spl}}

\caption{Unseen obfuscations evaluation. Similar to Figure~\ref{fig:unseen-opt}, the upper row, \ie (a)-(e), shows the results on function similarity detection. The lower row, \ie (f)-(j), are results on function signature prediction. We also include the evaluation on seen optimizations (marked in gray).}
\label{fig:unseen-obf}
\end{figure*}

\para{Unseen lengths.}
Besides the code transformations, we look into \sys's generalization to \emph{longer} sequences than those seen in training, a popular task for evaluating model generalizability~\citep{gordon2019permutation}. We divide samples into four length bins (\texttt{bin1} to \texttt{bin4}) based on their distribution in the dataset (\S\ref{sec:experimental_setup}). 
The bins are non-overlapping and increase in length. For example, we used bins [0-10], [1-20], [21-50], and [51-500] for function similarity detection. 
Figure~\ref{fig:unseen-length} demonstrates that \sys maintains strong generalization to longer sequences, outperforming \pto by 41.8\%.

\begin{figure*}[!t]
\centering

\includegraphics[width=.7\linewidth]{./figs/rq1/cross-opt/cross-opt-legend.pdf}

\subfloat[Function similarity]{
\includegraphics[width=0.32\linewidth]{./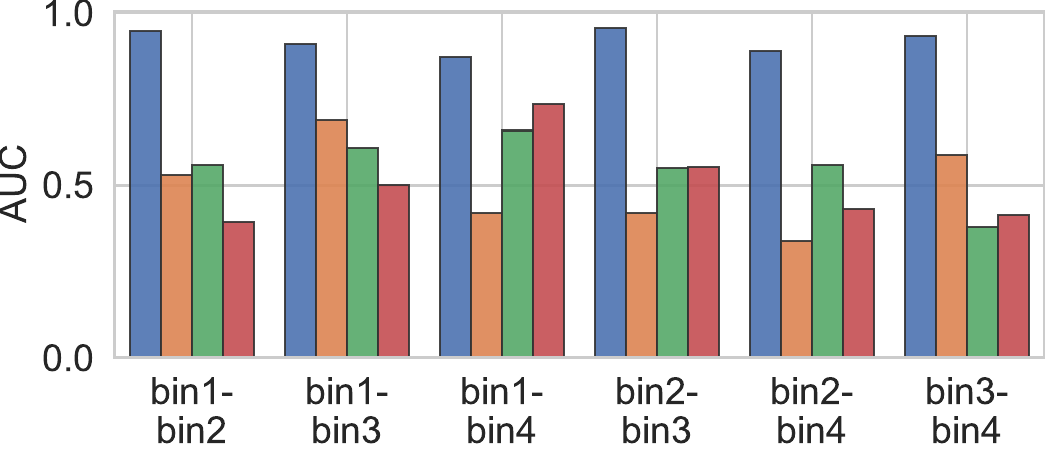}
\label{subfig:similarity-length}}\hfill
\subfloat[Function signature]{
\includegraphics[width=0.32\linewidth]{./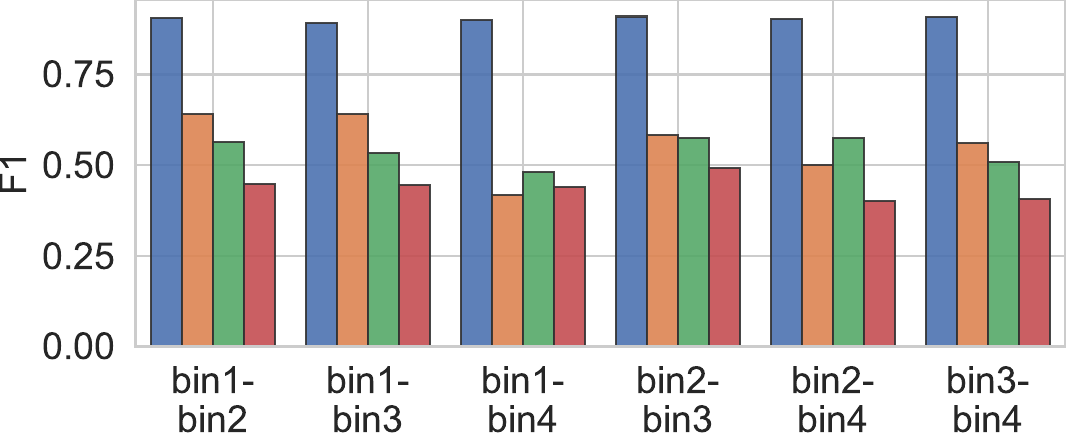}
\label{subfig:signature-length}}\hfill
\subfloat[Memory region]{
\includegraphics[width=0.32\linewidth]{./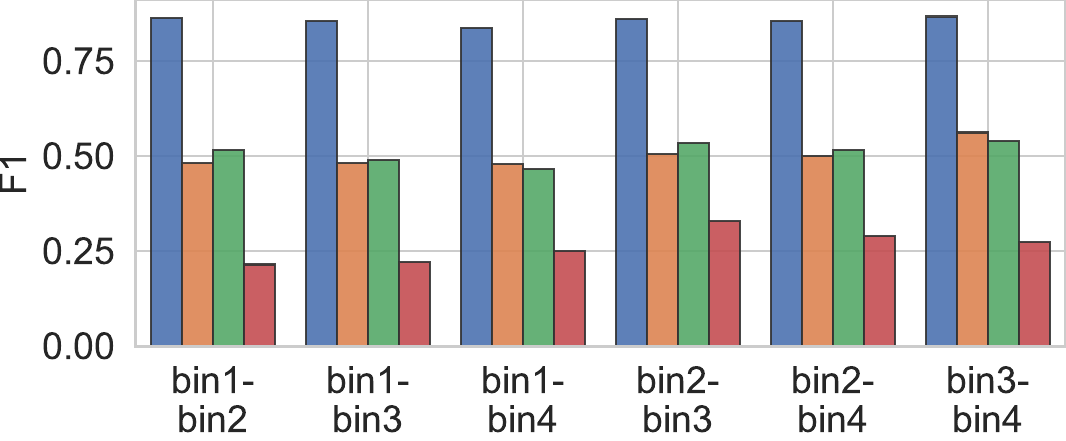}
\label{subfig:region-length}}

\caption{Evaluation on unseen samples with longer lengths. \texttt{bin1}-\texttt{bin4} denotes training on samples with lengths in \texttt{bin1} and testing on those in \texttt{bin4}.}
\label{fig:unseen-length}
\end{figure*}

\begin{table}[!t]

\centering
\setlength{\tabcolsep}{6pt}
\renewcommand{\arraystretch}{1}

\caption{The performance (F1) of \sys and baselines against the adversarial transformations transformations.}


\label{tab:adv}

\begin{tabular}{lrrr}
\toprule[1.1pt]
\textbf{} & \textbf{Orig.} & \textbf{Adv.} & \textbf{\begin{tabular}[c]{@{}l@{}}Invariance\\ Violation (\%)\end{tabular}} \\ \midrule[.9pt] 
\sys & \textbf{52.9} & \textbf{47.5} & \textbf{26} \\
GraphCodeBERT & 52.56 & 42.89 & 51\ \\
DOBF & 51.59 & 39.68 & 51 \\
CodeT5 & 44.21 & 36.66 & 47 \\ \bottomrule[1.1pt] 
\end{tabular}
\end{table}

\para{Adversarial robustness.}
In addition to randomly transforming samples, we consider adversarial attacks where the generation of semantics-preserving transformations is further guided by an objective that maximizes the changed predictions of the model.
In particular, we compare \sys and the baselines against the adversarial attack, \ie Averloc~\citep{ramakrishnan2020semantic}, for function name prediction. 
The adversarial transformations implemented in Averloc include a subset of the code transformations we considered, \eg variable renaming, dead code insertion, etc., with additional transformations such as loop unrolling. 
We follow the setting in Averloc by computing the adversarial attacks against a seq2seq model trained by the Averloc authors, and evaluate \sys and the baselines on the generated adversarial examples. 
This ensures a fair comparison by evaluating all the models on the same set of adversarial examples. 

Table~\ref{tab:adv} shows that \sys outperforms the second-best baseline, GraphCodeBERT, by 10.7\% and 49\%, in F1 and violation rate on the adversarial examples, respectively. 
This indicates the strong robustness of \sys against adversarial code transformations, even though the attacks are not statement permutations.

\subsection{Efficiency}
\label{subsec:efficiency}

\para{Overhead.}
To incorporate the inductive bias of code, many code models involve extracting and encoding code structures, including our baselines, \eg GraphCodeBERT and GGNN.
This is because computing PDGs statically is not overly expensive. 
Figure~\ref{fig:runtime-overhead} shows the runtime performance (in milliseconds) per code sample of \sys and PalmTree on extracting code structures, training, and inference, using the same exact hardware.

\begin{figure}[!t]
    \centering
    \includegraphics[width=.9\linewidth]{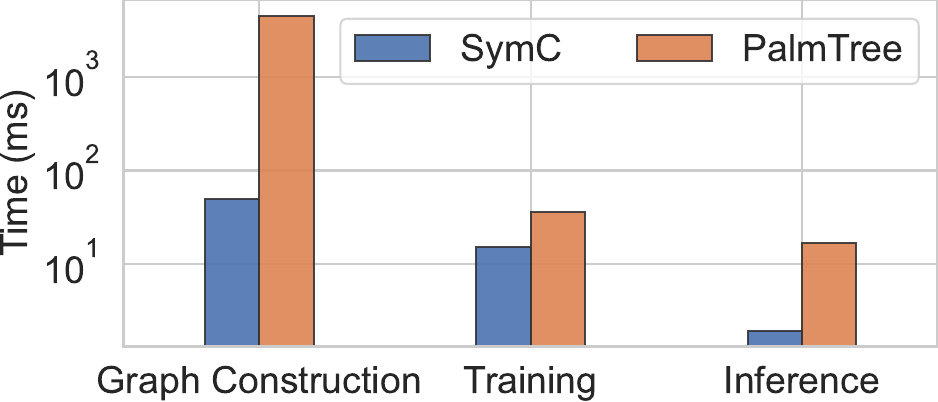}
    
    \caption{Comparing \sys to standard Transformer encoder in terms of the additional overhead introduced by constructing PDG, and training and inference with PDG-biased self-attention layers.}
    \label{fig:runtime-overhead}
\end{figure}

\sys's cheap computation of PDG incurs 88.8$\times$ less runtime overhead than PalmTree. 
However, our approach does incur additional computational cost for graph construction.
Therefore, it remains an interesting research problem to incorporate system optimization, \eg caching, to improve the efficiency of the PDG computation during inference.

\para{Training efficiency.}
We study the training effort (including both pre-training and fine-tuning) of \sys and \pto.
Table~\ref{tab:training-efficiency} shows their GPU hours, power, and emitted carbon dioxide estimation when they reach 0.5 F1 score in memory region prediction.
We assume the GPU always reaches its power cap (350W) to estimate an upper bound of the power usage.
CO$_2$eq stands for the carbon dioxide equivalent, a unit for measuring carbon footprints.
By being more training efficient, \sys incurs 1,281$\times$ less total GPU time, power, and emitted carbon dioxide than \pto in obtaining the same performance.

\begin{table}[!t]
    \setlength{\tabcolsep}{9pt}
    \renewcommand{\arraystretch}{1}
    
    \captionof{table}{The resource consumed by training \sys and other baselines to reach 0.5 F1 score in memory region prediction. }
    
    \label{tab:training-efficiency}
    
    \begin{center}
        \begin{tabular}{lrrr}
        \toprule[1.1pt]
        & \multirow{2}{*}{\begin{tabular}[r]{@{}r@{}}\textbf{Time}\\ (Hours)\end{tabular}} & \multirow{2}{*}{\begin{tabular}[r]{@{}r@{}}\textbf{Power}\\ (kWh)\end{tabular}} & \multirow{2}{*}{\begin{tabular}[r]{@{}r@{}}\textbf{Carbon}\\ (CO$_2$eq)\end{tabular}} \\ 
        &  &  & \\ \midrule[.9pt]
        \sys & \textbf{0.07} & \textbf{0.025} & \textbf{0.009} \\
        PalmTree-O$^*$ & 89.67 & 31.38 & 11.64 \\ \bottomrule[1.1pt]
        
        \multicolumn{4}{l}{\renewcommand{\arraystretch}{1}\begin{tabular}[c]{@{}l@{}}\scriptsize $^*$\pto did not disclose its hours for pre-training, so we include the\\\scriptsize pre-training time (in 10 epochs) based on our own pre-trained PalmTree. \end{tabular}}
        \end{tabular}
    \end{center}
\end{table}

\end{document}